\theoremstyle{plain}
\newtheorem{theorem}{Theorem}[section]
\newtheorem{lemma}[theorem]{Lemma}
\theoremstyle{definition}
\theoremstyle{remark}
\icmltitlerunning{Learning the Neighborhood: Contrast-Free Multimodal Self-Supervised Molecular Graph Pretraining}
\begin{document}

\twocolumn[
  \icmltitle{Learning the Neighborhood: Contrast-Free Multimodal Self-Supervised Molecular Graph Pretraining}



  \icmlsetsymbol{equal}{*}

  \begin{icmlauthorlist}
    \icmlauthor{Boshra Ariguib}{unist}
    \icmlauthor{Mathias Niepert}{unist}
    \icmlauthor{Andrei Manolache}{unist,bd}
  \end{icmlauthorlist}


  \icmlaffiliation{unist}{Computer Science Department, University of Stuttgart, Germany}
  \icmlaffiliation{bd}{Bitdefender, Romania}

  \icmlcorrespondingauthor{Boshra Ariguib}{ariguiba@studi.informatik.uni-stuttgart.de}


  \icmlkeywords{Machine Learning, ICML}

  \vskip 0.3in
]

\newcommand{\am}[1]{{{\textcolor{orange}{\textbf{[AM:} {#1}\textbf{]}}}}} 

\newcommand{\mn}[1]{{{\textcolor{blue}{\textbf{[MN:} {#1}\textbf{]}}}}}  

\newcommand{\ba}[1]{{{\textcolor{cyan}{\textbf{[BA:} {#1}\textbf{]}}}}} 

\newtheoremstyle{lemmaextra}%
  {10pt plus 3pt minus 2pt}   
  {8pt plus 2pt minus 2pt}    
  {\itshape}                  
  {}                          
  {\bfseries}                 
  {.}                         
  { }                         
  {}                          

\theoremstyle{lemmaextra}
\newtheorem{lemmasimp}{(Informal) Lemma}

\definecolor{first}{HTML}{FF3333} 
\definecolor{second}{HTML}{0173B2}
\definecolor{third}{HTML}{D55E00}

\newcommand{\secval}[1]{\textcolor{second}{#1}}
\newcommand{\firstval}[1]{\textcolor{first}{#1}}

\newcommand{\rebuttal}[1]{\textcolor{red}{#1}} 
\newcommand{\new}[1]{{#1}} 



\printAffiliationsAndNotice{}  

\begin{abstract}
  High-quality molecular representations are essential for property prediction and molecular design, yet large labeled datasets remain scarce. While self-supervised pretraining on molecular graphs has shown promise, many existing approaches either depend on hand-crafted augmentations or complex generative objectives, and often rely solely on 2D topology, leaving valuable 3D structural information underutilized. To address this gap, we introduce C-FREE (\textit{\textbf{C}ontrast-\textbf{F}ree \textbf{R}epresentation learning on \textbf{E}go-n\textbf{e}ts}), a simple framework that integrates 2D graphs with ensembles of 3D conformers. C-FREE learns molecular representations by predicting subgraph embeddings from their complementary neighborhoods in the latent space, using fixed-radius ego-nets as modeling units across different conformers. This design allows us to integrate both geometric and topological information within a hybrid Graph Neural Network (GNN)-Transformer backbone, without negatives, positional encodings, or expensive pre-processing. Pretraining on the GEOM dataset, which provides rich 3D conformational diversity, C-FREE achieves state-of-the-art results on MoleculeNet, surpassing contrastive, generative, and other multimodal self-supervised methods. 
Fine-tuning across datasets with diverse sizes and molecule types further demonstrates that pretraining transfers effectively to new chemical domains, highlighting the importance of 3D-informed molecular representations. 
Our code is available at \url{https://github.com/ariguiba/C-FREE}.
\end{abstract}

\section{Introduction}

High-quality molecular representations are critical for predicting properties, interpreting chemical behavior, and accelerating compound discovery \citep{Wigh2022Sep, Elton2019Aug}. Many existing approaches, however, rely on a single modality, such as SMILES strings~\citep{smiles2018hirohara, smiles-bert2019wang}, 2D graph structures~\citep{firstmpnn2017gilmer, graphcnn2017kipf, gin2019xu}, or 3D conformations~\citep{schnet2017schutt, gasteiger2021gemnet}. While effective, each of these methods captures only part of the molecular information and overlooks complementary aspects available in other modalities~\citep{challenge2023liu}. Beyond modality limitations, these models often require large, curated datasets, which restricts their use in low-data settings.

Because such curated datasets are often unavailable, especially in low-data regimes, self-supervised learning (SSL) provides a promising alternative to fully supervised training. Recent advances in vision and language modeling~\citep{bert2019devlin, Chen2020Jul, NEURIPS2020_f3ada80d, 9709990, mae2022he, LeCun2022APT} have motivated similar methods for molecular graphs, especially approaches that aim to combine structural and geometric information.  While these approaches have advanced molecular representation learning, each comes with trade-offs: contrastive methods hinge on carefully chosen negative samples~\citep{graphcl2021you, joao2021you}, generative methods often require discrete reconstruction in the input graph space with graph tokenization~\citep{gptgnn2020hu}, and latent-predictive methods can depend on augmentations or expensive procedures such as clustering~\citep{graphjepa2025skenderi}. These challenges motivate simpler predictive frameworks that combine 2D topology and 3D conformations without relying on negatives or input-space reconstruction, and that work reliably across both high- and low-data settings.

\paragraph{Current work.} Our self-supervised framework C-FREE (Contrast-Free Representation learning on Ego-nets) adopts a predictive learning strategy with subgraphs as the basic modeling unit. It is motivated by three goals: 
(i) \textit{avoiding computationally intensive or ambiguous design choices}, including expensive augmentations, heavy subgraph-construction procedures, and complex negative-sampling schemes. For example, clustering-based subgraph algorithms such as METIS~\citep{graphjepa2025skenderi} can be costly, and defining suitable augmentations or negatives is often non-trivial, since molecules with nearly identical structures (e.g., chiral isomers) may still have very different properties;
(ii) \textit{leveraging the success of subgraph-based methods in 2D graph supervised learning} \citep{esan2022bevilacqua,wollschlager2024expressivity}, which suggest that aggregating information from substructures can yield richer graph-level representations,
and (iii) \textit{harnessing the benefits of multimodal architectures in the supervised setting}~\citep{marcel2024zhu, e3i-gnn2024nguyen, molmix2024manolache}. Since many molecular properties depend on multiple conformations and their probabilities~\citep{cao2022design}, using multiple high-probability conformers alongside 2D topology helps capture this variability and improves predictive performance.
Building on JEPA~\citep{jepa2023assran} and Equivariant Subgraph Aggregation Networks (ESAN)~\citep{esan2022bevilacqua}, our method segments graphs into disjoint subgraphs, similar to image patches or language tokens, and learns to align each subgraph with its context in latent space. Unlike GraphJEPA~\citep{graphjepa2025skenderi} and I-JEPA~\citep{jepa2023assran}, it avoids positional encodings, hierarchical objectives, and costly clustering, and instead leverages the inductive bias of 2D and 3D encoders together with subgraph-based pretraining to learn rich embeddings. Our contributions are as follows:

\begin{enumerate}
    \item \textbf{A new multimodal pretraining task for molecular graphs.} We introduce a broadly applicable predictive objective based on $k$-EgoNet subgraphs, avoiding costly hand-crafted augmentations and utilizing both 2D and 3D views of the molecule.
    \item \textbf{Robust performance in both multimodal and 2D-only settings.} Our framework leverages 2D topology together with multiple 3D conformations when available, but also performs strongly in purely 2D settings where conformers are absent.
    \item \textbf{A simple and effective training scheme.} We adopt non-contrastive predictive learning, avoiding the pretrain/fine-tune mismatch and removing the need for negative samples or augmentations. Moreover, when fine-tuning, our framework simulates ESAN~\citep{esan2022bevilacqua} and is provably more expressive than $1$-WL~\citep{Wei+1968}
    \item \textbf{State-of-the-art results.} Our approach matches or surpasses other self-supervised models under both linear-probe evaluation, where the backbone is frozen, and full fine-tuning. It achieves the best average performance on MoleculeNet~\citep{moleculenet2018wu} and shows strong transfer to novel multimodal molecular benchmarks such as MARCEL~\citep{marcel2024zhu}.
\end{enumerate}

\setlength{\belowcaptionskip}{-5pt}
\begin{figure*}[!t]
    \centering
    \includegraphics[width=0.75\textwidth]{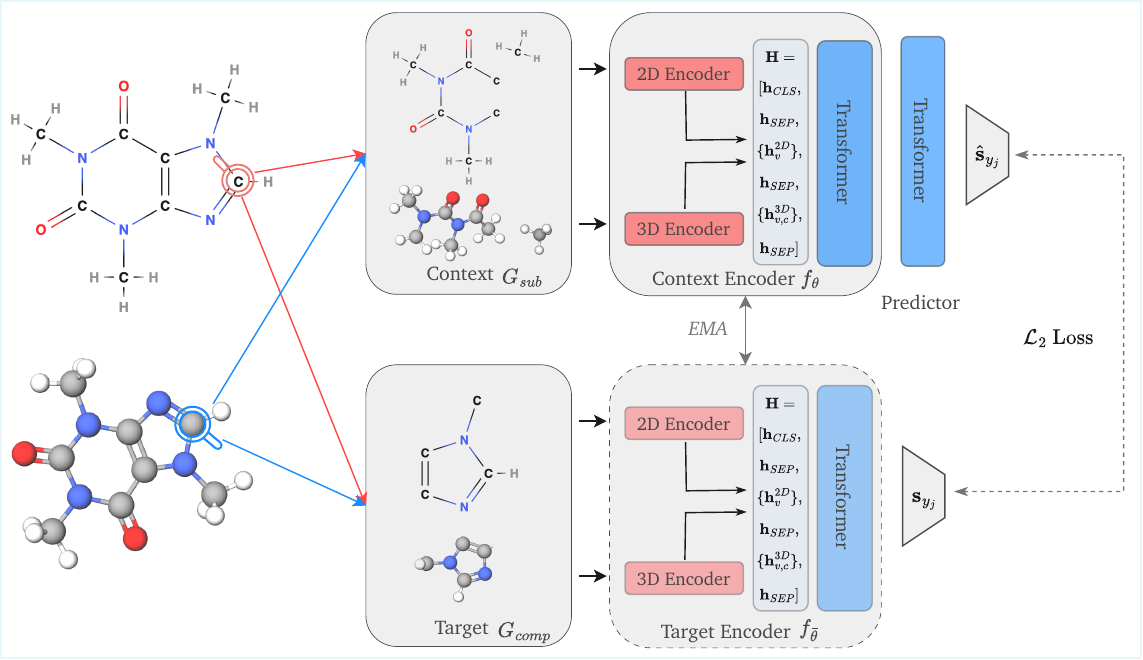}
    \caption{From each molecular graph, we sample a random node and extract its k-EgoNet~\citep{esan2022bevilacqua} with $k \in \{3,4\}$ to form complementary context and target subgraphs. Both 2D and 3D views are encoded with a GINE and a PaiNN, concatenated, and passed through a transformer; the context embedding is further processed by a predictor to estimate the target. Training minimizes the mean squared $\mathcal{L}_2$ loss between predicted and encoded targets, with the target encoder updated as an exponential moving average (EMA) of the context encoder~\citep{byol2020grill, jepa2023assran}. Though we show one 3D conformation, we use three in practice. During fine-tuning, the target encoder serves as the pretrained backbone with task-specific heads added. See Appendix~\cref{fig:apx_downstream} for details.}
    
    \label{method}
\end{figure*}

\section{Related Work}

Existing approaches to graph self-supervised learning can be grouped into three main categories: \textit{contrastive learning}, \textit{generative pretraining}, and \textit{latent representation learning}. Each of these has also been extended to molecular graphs, with varying degrees of multimodal integration.  

\textit{Contrastive learning} aligns representations of similar instances while pushing apart dissimilar ones and has become central to graph representation learning. GraphCL~\citep{graphcl2021you} and JOAO~\citep{joao2021you} pioneered this idea through graph augmentations, while InfoGraph~\citep{infograph2020sun} maximized mutual information across views. Extensions to molecules incorporate 3D information: GraphMVP~\citep{graphmvp2022liu} aligns 2D topology and 3D conformations with generative objectives, MoleculeSDE~\citep{moleculesde2023liu} introduces symmetry-aware stochastic differential equations, and 3D InfoMax~\citep{3dinfomax2022stark} encodes 3D from 2D via mutual information.
While effective~\citep{ssleval2023wang}, these methods depend on negative samples and large batches~\citep{graphcl2021you}, a limitation exacerbated by irregular graph structures.

\textit{Generative pretraining} forms the second category of self-supervised learning, where models reconstructs masked or missing components of a graph from its surrounding context. Early works such as AttrMask~\citep{attrmask2020hu}, ContextPred~\citep{attrmask2020hu}, and EdgePred~\citep{graphsage2017hamilton} predict attributes or edges, while GPT-GNN~\citep{gptgnn2020hu} and GROVER~\citep{grover2020rong} extend to autoregressive reconstruction and chemically informed motif prediction. More recent methods integrate multimodal and geometric signals: unified cross-modal generation of 2D/3D~\citep{unified2022zhu}, modality integration via MoleBlend~\citep{moleblend2023yu}, and geometry-aware prediction in 3D PGT~\citep{3dpgt2023wang}. Despite their promise, generative methods must reconstruct both discrete graph structure and continuous features, with autoregressive variants further complicated by the lack of natural node ordering.

\textit{Latent representation learning} forms the third category of self-supervised methods. Instead of reconstructing raw graph structures or features, these approaches predict target embeddings directly in latent space, yielding compact, denoised, and often across multimodal representations. BGRL~\citep{bgrl2023thakoor} employs a bootstrapped online–target encoder scheme under augmentations, while LaGraph~\citep{lagraph2022xie} frames the task as latent graph prediction, optimizing an upper bound with context-aware regularization on masked nodes. While latent prediction methods avoid the costly generation of negatives, they remain sensitive to augmentation quality and model update stability, and are prone to representation collapse~\citep{jepa2023assran, byol2020grill}.

Within latent representation learning, GraphJEPA~\citep{graphjepa2025skenderi} extends the Joint Embedding Predictive Architecture (JEPA)~\citep{jepa2023assran} to graphs by masking METIS-generated clusters and predicting them as patch-like substructures, while also encoding hierarchical information via hyperbolic subgraph coordinates. While effective, this approach incurs significant computational cost and depends on auxiliary components ---such as clustering, hierarchical encodings, positional embeddings--- that add complexity without being clearly essential for representation learning.

Another direction explores large-scale supervised pretraining on massive labeled molecular datasets, aiming to transfer knowledge to downstream tasks~\citep{beaini2024towards}. While effective in some cases, this approach still depends on labeled data and may be domain-specific, since source and target distributions can differ. In contrast, self-supervised methods avoid this reliance on labels and can transfer more flexibly. Importantly, the two strategies are complementary: self-supervised pretraining can provide strong initializations that are later fine-tuned on labeled data.

To the best of our knowledge, our work is the first non-contrastive, non-generative predictive framework for multimodal molecular representation learning. We now turn to its design.

\section{Contrast-Free Multimodal Self-Supervised Pretraining}
\label{sec:method}

In the following, we outline our proposed training pipeline, illustrated in~\cref{method}. Unlike most generative methods~\citep{attrmask2020hu,graphsage2017hamilton}, we apply our training objective fully in the latent space, without reconstructing the original features of the masked components.

The core principle of our approach is to learn representations by aligning the embedding of a target view with that of a related context view. Specifically, we represent each molecule as a 2D graph $G = (V, E)$, where $V$ is the set of nodes (atoms) and $E$ the set of edges (covalent bonds). For each atom $v \in V$, we include 3D coordinates $r_v \in \mathbb{R}^3$, taken from multiple conformers of the molecule. Using these graph and geometric features, we construct complementary context–target views that serve as inputs for our contrast-free pretraining scheme. Each 2D and 3D view is encoded independently, and the resulting embeddings are concatenated into a single multimodal sequence processed by a transformer~\citep{attention2017vaswani}. We then align the embedding of the target subgraph with that of its associated context subgraph. This design is loosely inspired by ESAN~\citep{esan2022bevilacqua},
but adopts a simplified variant: we use fixed-radius ego-nets to obtain 
complementary views during pretraining, and at fine-tuning 
we evaluate both linear probing on whole-graph embeddings and an aggregation of subgraph embeddings using DeepSets~\citep{deep2017zaheer}.

\paragraph{Context–Target View Generation.} 
We generate complementary 2D views by sampling $k$-EgoNets, where the $k$-hop neighborhood of a node defines one subgraph and the remaining nodes and edges define its complement (see Figure~\ref{fig:subgraph} in the Appendix). \new{Specifically, we define the k-ego-net of a node as its k-hop neighbourhood with the induced connectivity. Building on this, we define complementary subgraphs as follows: the context consists of the k-EgoNet of an anchor node, while the target comprises the remaining edges not included in the context. To guarantee full edge coverage, edges on the boundary between the two subgraphs are assigned exclusively to one subgraph; boundary nodes, however, are duplicated and appear in both, ensuring the two subgraphs are edge-disjoint.}
The 3D coordinates are added to generate the corresponding 3D views for all the conformers.
Either view can serve as the target while the other acts as context, and their roles are alternated during training to avoid prediction bias. We adopt fixed-radius neighborhoods with $k \in \{3,4\}$,
analogous to fixed-size patches in vision-based methods~\citep{jepa2023assran}.
Although graphs vary in size and structure, this ensures that each subgraph captures a comparable amount of local information. 
To further diversify training, we sample multiple nodes ${v_1, v_2, \dots, v_n}$ per molecule and construct their corresponding $k$-EgoNets
${E(v_1), E(v_2), \dots, E(v_n)}$, yielding multiple complementary context–target pairs without increasing dataset size.

\paragraph{Context Encoder.} We aim to learn subgraph representations that generalize to whole-molecule embeddings. Following the architecture proposed in~\cite{molmix2024manolache}, we use a message-passing neural network (MPNN) with GINE~\citep{attrmask2020hu, gin2019xu} as the 2D encoder, and experiment with both PaiNN~\citep{schuett2021painn} and SchNet~\citep{schnet2017schutt} as the 3D encoder used to process multiple conformers. 
\new{We restrict our framework to 2D and 3D modalities; we discuss the exclusion of 1D representations such as SMILES in Appendix~\ref{app:modalities}.}

From GINE, we obtain node-level embeddings $\{\mathbf{h}^{2D}_v\}$ for all atoms in the subgraph by averaging their intermediate representations across layers. From the 3D encoder, we extract node-level embeddings $\{\mathbf{h}^{3D}_{v,c}\}$ for each conformer $c$, preserving per-atom detail across conformations. To build the multimodal sequence, we prepend a learnable classification token $\mathbf{h}_{CLS}$ and insert a learnable separation token $\mathbf{h}_{SEP}$ between the 2D and 3D components, resulting in the following multimodal sequence:

\begin{align*}
\mathbf{H} = [ \mathbf{h}_{CLS}, \mathbf{h}_{SEP}, \{\mathbf{h}^{2D}_v\}, \mathbf{h}_{SEP}, \{\mathbf{h}^{3D}_{v,c}\}, \mathbf{h}_{SEP}] 
\end{align*}

To distinguish between modalities, we add learnable modality embeddings that mark whether a token comes from the 2D or 3D graph. The full sequence is then passed through a Transformer with multiple self-attention layers to capture global dependencies both within and across modalities.

\paragraph{Predictor Network.} \new{The predictor takes as input the full multimodal token sequence, after it has been processed by the context encoder.} The predictor is a lightweight transformer followed by an MLP, which maps the context embedding to the representation of the complementary subgraph. Since the upstream modality encoders already capture spatial and relational information, we do not add explicit positional encodings, unlike image-based JEPA~\citep{jepa2023assran} and 2D GraphJEPA~\citep{graphjepa2025skenderi}.

\paragraph{Target Encoder.} The target subgraph $f_{\bar{\theta}}$ is encoded by a separate instance of the context encoder. Maintaining two distinct networks stabilizes training and mitigates representation collapse, a strategy widely adopted in self-predictive frameworks such as BYOL~\citep{byol2020grill}, I-JEPA~\citep{jepa2023assran}, and BGRL~\citep{bgrl2023thakoor}. The target encoder’s weights are updated via an exponential moving average (EMA) of the context encoder’s parameters:
\begin{align*}
\bar{\theta}^{(t)} = \tau \, \bar{\theta}^{(t-1)} + (1 - \tau) \, \theta^{(t)},
\end{align*}
where $\bar{\theta}^{(t)}$ are the exponentially moving averaged parameters at step $t$, 
$\theta^{(t)}$ are the current context encoder parameters, and $\tau \in [0, 1]$ is the decay rate controlling the 
contribution of past parameters.
\paragraph{Pretraining task.} Each subgraph is represented by a single multimodal embedding, taken from the final classification token embedding $\mathbf{h}_{CLS}^{out}$. For the \emph{context} subgraph, we feed the entire multimodal token sequence from the context encoder into the predictor transformer and take its output CLS token as the predicted embedding. For the \emph{target} subgraph, we use the CLS token directly from the target encoder. The self-supervised objective minimizes the mean squared $\mathcal{L}_2$ distance between the predicted context embedding and the target embedding:
\begin{equation*}
    \frac{1}{M} \sum_{i=1}^{M} \sum_{j=1}^{N_G}
\left\| \hat{\mathbf{s}}_{y_j} - \mathbf{s}_{y_j} \right\|^{2},
\end{equation*}
where $\hat{\mathbf{s}}_{y_j}$ and $\mathbf{s}_{y_j}$ denote the predicted and target subgraph embeddings, $M$ is the batch size, and $N_G$ the number of sampled views (ego-nets and their complements). All views are treated as separate instances when computing the loss. 

\paragraph{Fine-tuning.} When fine-tuning, we use the target encoder as our pretrained backbone to generate graph embeddings, and add lightweight task-specific heads. We consider two types of task heads: (i) linear probing on whole molecule embeddings with a single linear layer (C-FREE$_{\text{MOL}}$) to evaluate representation quality, and (ii) aggregating $k$-EgoNet subgraph embeddings with DeepSets~\citep{deep2017zaheer} (C-FREE$_{\text{SUB}}$), showing that subgraph pretraining transfers both to whole-molecule prediction and to ESAN-style fine-tuning schemes. We find that C-FREE$_{\text{SUB}}$ is especially beneficial in the 2D-only setting, while for multimodal inputs both heads perform similarly, likely because 3D information compensates for the lower expressiveness. Nevertheless, downstream convergence is faster with DeepSets, suggesting advantages in aligning pretraining and fine-tuning.

\begin{table*}[!t]
\caption{Performance on MoleculeNet~\citep{moleculenet2018wu} with 2D-only frozen backbones. \textbf{Non-CL} denotes non-contrastive and \textbf{CL} contrastive methods. We report \textbf{C-FREE\(_{\text{2D}}\)} (2D-only) and \textbf{C-FREE\(_{\text{MM}}\)} (multimodal), each with linear probing on whole-molecule embeddings (\textbf{MOL}) or on subgraphs using subgraph aggregation with DeepSets~\citep{deep2017zaheer} (\textbf{SUB}). Metric: ROC-AUC ($\uparrow$). \firstval{\textbf{Red}} marks the best model and \secval{\textbf{Blue}} the second best. \textbf{C-FREE} ranks first or second on 6 of 8 datasets, with \textbf{MM-MOL} best overall, while even the 2D-only variants of C-FREE outperform all baselines on average. The full table from~\citet{ssleval2023wang} can be found in the Appendix~\cref{apx:add-exps}}

\label{tab:ssl-compare}
\begin{center}
		\begin{small}
			\begin{sc}
\resizebox{\textwidth}{!}{%
    \begin{tabular}{llcccccccc|cc}
    \toprule
    & & \multicolumn{8}{c}{\textbf{MoleculeNet Datasets (Linear Probe)}}\\
    &  & BBBP ($\uparrow$) & Tox21 ($\uparrow$) & ToxCast ($\uparrow$) & Sider ($\uparrow$) & ClinTox ($\uparrow$) & MUV ($\uparrow$) & HIV ($\uparrow$) & Bace ($\uparrow$) & Avg ($\uparrow$) \\ 
    \midrule
    
    & Random init.     & $50.7_{\pm 2.5}$ & $64.9_{\pm 0.5}$ & $53.2_{\pm 0.3}$ & $53.2_{\pm 1.1}$ & $63.1_{\pm 2.3}$ & $62.1_{\pm 1.3}$ & $66.1_{\pm 0.7}$ & $63.4_{\pm 1.8}$ & $59.60$ \\ 
    \midrule
    \multirow{4}{*}{\rotatebox[origin=c]{90}{{CL}}}
    & InfoGraph   & $65.9_{\pm 0.6}$ & $65.8_{\pm 0.7}$ & $54.6_{\pm 0.1}$ & $57.2_{\pm 1.0}$ & $61.4_{\pm 4.8}$ & $63.9_{\pm 1.9}$ & $71.4_{\pm 0.6}$ & $67.4_{\pm 4.9}$ & $63.44$ \\ 
    & GROVER      & ${{67.0}_{\pm 0.3}}$ & $63.9_{\pm 0.3}$ & $53.6_{\pm 0.4}$ & \firstval{$\mathbf{59.9}_{\pm 1.7}$} & $65.0_{\pm 6.4}$ & $62.7_{\pm 1.4}$ & $67.8_{\pm 1.0}$ & $69.0_{\pm 4.7}$ & $63.62$ \\ 
    & GraphCL     & $64.7_{\pm 1.7}$ & $69.1_{\pm 0.5}$ & $56.2_{\pm 0.2}$ & \secval{$\mathbf{59.5}_{\pm 0.9}$} & $60.8_{\pm 3.0}$ & $60.6_{\pm 1.8}$ & $72.5_{\pm 1.4}$ & \firstval{$\mathbf{77.0}_{\pm 1.7}$} & $65.04$ \\ 
    \midrule
    \multirow{8}{*}{\rotatebox[origin=c]{90}{{Non-CL}}}
    & EdgePred    & $54.2_{\pm 1.0}$ & $66.2_{\pm 0.2}$ & $54.4_{\pm 0.1}$ & $56.1_{\pm 0.1}$ & $65.4_{\pm 5.0}$ & $59.5_{\pm 0.9}$ & \firstval{$\mathbf{73.6}_{\pm 0.4}$} & $71.4_{\pm 1.2}$ & $62.59$ \\ 
    & AttrMask    & $62.7_{\pm 2.7}$ & $65.7_{\pm 0.8}$ & $56.1_{\pm 0.2}$ & $58.3_{\pm 1.5}$ & $61.9_{\pm 6.4}$ & $60.9_{\pm 1.8}$ & $65.5_{\pm 1.4}$ & $64.8_{\pm 2.6}$ & $61.99$ \\ 
    & ContextPred  & $55.5_{\pm 2.0}$ & $67.9_{\pm 0.7}$ & $54.0_{\pm 0.3}$ & $57.1_{\pm 0.5}$ & {${67.4}_{\pm 4.3}$} & $60.5_{\pm 0.9}$ & $66.2_{\pm 1.5}$ & $54.4_{\pm 3.2}$ & $60.36$ \\ 
    \addlinespace[0.3ex]
    \cdashline{2-12}[0.5pt/2pt]
    \addlinespace[0.5ex]
    & \textbf{C-FREE$_{\text{2D-MOL}}$}  & $60.5_{\pm 1.7}$ & \secval{${\mathbf{76.1}_{\pm 0.2}}$} & {${62.7}_{\pm 0.4}$} & ${59.0}_{\pm 0.6}$ & $62.7_{\pm 1.0}$ & ${{67.6}_{\pm 0.5}}$ & $68.7_{\pm 0.4}$ & \secval{$\mathbf{75.8}_{\pm 0.9}$} & {${66.63}$} \\
    & \textbf{C-FREE$_{\text{2D-SUB}}$}  & ${64.2}_{\pm 3.8}$ & \secval{$\mathbf{76.7}_{\pm 0.6}$} & {${63.9}_{\pm 0.3}$} & $58.0_{\pm 0.7}$ & {${71.4}_{\pm 3.7}$} & $64.6_{\pm 3.1}$ & $65.5_{\pm 0.6}$ & $73.9_{\pm 0.7}$ & $67.27$ \\ 
    
    \addlinespace[0.3ex]
    \cdashline{2-12}[0.5pt/2pt]
    \addlinespace[0.5ex]
    
    & \textbf{C-FREE$_{\text{MM-MOL}}$}  & \secval{$\mathbf{69.8}_{\pm 2.6}$} & $\firstval{\mathbf{79.9}_{\pm 1.1}}$ & \secval{$\mathbf{65.8}_{\pm 0.7}$} & $58.5_{\pm 2.5}$ & $\secval{\mathbf{69.9}_{\pm 1.9}}$ & $\firstval{\mathbf{76.6}_{\pm 2.8}}$ & \secval{$\mathbf{72.8}_{\pm 0.7}$} & $75.3_{\pm 1.1}$ & ${\firstval{\mathbf{71.07}}}$  \\
    & \textbf{C-FREE$_{\text{MM-SUB}}$} & $\firstval{\mathbf{73.8}_{\pm 2.1}}$ & $\secval{\mathbf{76.7}_{\pm 0.7}}$ & $\firstval{\mathbf{66.8}_{\pm 0.2}}$ & $56.4_{\pm 1.5}$ & \firstval{$\mathbf{75.7}_{\pm 2.2}$} & \secval{$\mathbf{70.6}_{\pm 1.0}$} & $71.9_{\pm 1.5}$ & $75.5_{\pm 1.9}$ & $\secval{\mathbf{70.92}}$\\
    \bottomrule
    \end{tabular}
}
\end{sc}
\end{small}
\end{center}
\end{table*}

\subsection{Invariance and Expressiveness}
\label{sec:expressiveness}

We make two theoretical observations. First, with the DeepSets head, C-FREE$_{\text{SUB}}$ simulates ESAN~\citep{esan2022bevilacqua} and is more expressive than $1$-WL~\citep{Wei+1968}. An informal statement is given below; the full theorem and proof are in Appendix~\cref{apx:wl}.

\begin{lemmasimp}
\label{lemma:wl}
    Under the assumptions from Theorem 2 of \citep{esan2022bevilacqua}, C-FREE with a DeepSets task head is as expressive as ESAN, hence it is strictly more expressive than the $1$-WL algorithm~\citep{Wei+1968}.
\end{lemmasimp}

Second, C-FREE preserves the invariances of its modality encoders. Prior work has shown that architectures of this form inherit invariance from their encoders~\citep{molmix2024manolache}, and the same holds for our framework. For completeness, we include the lemma in Appendix~\cref{apx:wl}. As empirical validation, we also provide an experiment on the EXP dataset in the Appendix ~\ref{apx:exp:wl}.

\section{Empirical Evaluation}
\label{sec:empirics}

We evaluate our framework through four complementary sets of experiments:  

\begin{enumerate}[label=(\roman*)]
    \item \textbf{Frozen backbone evaluation.} We assess representation quality by freezing the backbone and training linear probes (\cref{sec:frozen}). On MoleculeNet~\citep{moleculenet2018wu}, C-FREE outperforms contrastive and non-contrastive baselines and remains effective even with 2D-only inputs (\cref{tab:ssl-compare}). 

    \item \textbf{Full fine-tuning.} We further evaluate end-to-end adaptability by updating the full backbone with task-specific heads. On MoleculeNet, this setup tests how well pretraining transfers to dataset-specific classification tasks (\cref{sec:fft-molnet}). On QM9, we compare against larger baseline models and evaluate chemically informed subgraph pretraining (\cref{sec:murcko}). On Kraken, we find that pretraining improves downstream regression performance over random initialization (\cref{tab:modality-ablation}). Finally, on the larger Drugs-75K dataset, we study label efficiency by fine-tuning on progressively larger subsets of labeled data (\cref{sec:drugs}).


    \item \textbf{Ablations.} We conduct four ablations: (i) removing each modality to assess its contribution (\cref{sec:modality-ablation}), (ii) removing the predictor network to evaluate its impact on training stability (\cref{sec:predictor-ablation}), (iii) varying the EMA decay rate to assess representation quality (\cref{sec:ema-ablation}), and (iv) using single k-EgoNet subgraph types to evaluate the effect of the radius parameter $k$ (\cref{sec:k-ablation}).

    \item \textbf{Theory alignment.} We verify whether the empirical expressiveness aligns with the theoretical result from Lemma~\ref{lemma:wl} (\cref{apx:exp:wl}).
\end{enumerate}  

Implementation details for pretraining and evaluation are provided in~\cref{apx:exp_details} in the Appendix.

\begin{table*}[!t]
\caption{Performance on MoleculeNet~\citep{moleculenet2018wu} with full end-to-end fine-tuning. For this experiment, the downstream model receives the entire molecule as input, and we limit evaluation to the multimodal variant.
\textbf{Non-CL} denotes non-contrastive, \textbf{CL} contrastive, and \textbf{Multi} multimodal methods. Our model is reported as \textbf{C-FREE\(_{\text{MM-MOL}}\)}, with the multimodal variant using both modalities. The evaluation metric is ROC-AUC ($\uparrow$). \firstval{\textbf{Red}} highlights the best results and \secval{\textbf{Blue}} the second best. \textbf{C-FREE} achieves the best results on 4 out of 8 datasets and ranks first overall, outperforming both multimodal baselines. }

\label{tab:ssl-compare-fft}
\begin{center}
		\begin{small}
			\begin{sc}
\resizebox{\textwidth}{!}{%
    \begin{tabular}{llcccccccc|cc}
    \toprule
    & & \multicolumn{8}{c}{\textbf{MoleculeNet Datasets (Full Fine-Tuning)}}\\
    &  & BBBP ($\uparrow$) & Tox21 ($\uparrow$) & ToxCast ($\uparrow$) & Sider ($\uparrow$) & ClinTox ($\uparrow$) & MUV ($\uparrow$) & HIV ($\uparrow$) & Bace ($\uparrow$) & Avg ($\uparrow$) \\
    \midrule
    \multirow{5}{*}{\rotatebox[origin=c]{90}{{CL}}}
    & InfoGraph & $67.5_{\pm 0.1} $ & $73.2_{\pm 0.4} $  & $63.7_{\pm 0.5} $  & $59.9_{\pm 0.3} $  & $76.5_{\pm 1.0} $  & $74.1_{\pm 0.7} $  & $75.1_{\pm 0.9} $  & $77.8_{\pm 0.8} $  & $70.98 $ \\
    & GROVER  &  $70.0_{\pm 0.1} $  & $74.3_{\pm 0.1} $  & $65.4_{\pm 0.4} $  & ${{64.8}_{\pm 0.6}}$  & $81.2_{\pm 3.0} $  & $67.3_{\pm 1.8} $ &  $62.5_{\pm 0.9} $ &  ${{82.6}_{\pm 0.7}}$ &  $71.01 $ \\
    & GraphCL &  $69.7_{\pm 0.6} $ & $73.9_{\pm 0.6} $ & $62.4_{\pm 0.5} $ & $60.5_{\pm 0.8} $ & $76.0_{\pm 2.6} $ & $69.8_{\pm 2.6} $ & $78.5_{\pm 1.2} $ & $75.4_{\pm 1.4} $ & $70.78 $ \\
    & MolCLR  & $66.6_{\pm 1.8} $  & $73.0_{\pm 0.1} $  & $62.9_{\pm 0.3} $  & $57.5_{\pm 1.7} $  & $86.1_{\pm 0.9} $  & $72.5_{\pm 2.3} $ &  $76.2_{\pm 1.5} $ &  $71.5_{\pm 3.1} $ &  $70.79 $ \\
    & GraphLoG  & $72.5_{\pm 0.8} $  & $75.7_{\pm 0.5} $  & $63.5_{\pm 0.7} $  & $61.2_{\pm 1.1} $  & $76.7_{\pm 3.3}  $ & $76.0_{\pm 1.1} $ &  $77.8_{\pm 0.8} $ &  ${{83.5}_{\pm 1.2}}$ &  $73.40 $ \\
    \midrule
    
    \multirow{5}{*}{\rotatebox[origin=c]{90}{{Non-CL}}}
    & AttrMask & $65.0_{ \pm 2.3}$ & $74.8_{\pm 0.2}$ & $62.9_{\pm 0.1}$ & $61.2_{\pm 0.1}$& {${87.7}_{\pm 1.1}$}& $73.4_{\pm 2.0}$ & $76.8_{\pm 0.5}$ & $79.7_{\pm 0.3}$ & $72.68$ \\
    & ContextPred & $65.7_{\pm 0.6}$ & $74.2_{\pm 0.0}$ & $62.5_{\pm 0.3}$ & $62.2_{\pm 0.5}$ & $77.2_{\pm 0.8}$ & $75.3_{\pm 1.5}$ & $77.1_{\pm 0.8}$ & $76.0_{\pm 2.0}$ & $71.28$ \\
    & GraphMAE & $72.0_{\pm 0.6}$  & $75.5_{\pm 0.6}$  & $64.1_{\pm 0.3}$  & $60.3_{\pm 1.1}$  & $82.3_{\pm 1.2}$  & $76.3_{\pm 2.4}$ & $77.2_{\pm 1.0}$ & $83.1_{\pm 0.9}$ & $73.85$ \\
    & MGSSL  & $69.7_{\pm 0.9}$  & $76.5_{\pm 0.3}$  & $64.1_{\pm 0.7}$  & $61.8_{\pm 0.8}$  & $80.7_{\pm 2.1}$  & $78.7_{\pm 1.5}$ & ${{78.8}_{\pm 1.2}}$ & $79.1_{\pm 0.9}$ & $73.70$ \\
    & Mole-BERT  & $71.9_{\pm 1.6}$  & $76.8_{\pm 0.5}$  & $64.3_{\pm 0.2}$  & $62.8_{\pm 1.1}$  & $78.9_{\pm 3.0}$  & $78.6_{\pm 1.8}$ & $78.2_{\pm 0.8}$ & $80.8_{\pm 1.4}$ & $74.04$ \\

    \midrule
    \multirow{8}{*}{\rotatebox[origin=c]{90}{{Multi}}}
    & GraphMVP  &  $68.5_{\pm 0.2} $  & $74.5_{\pm 0.4} $  & $62.7_{\pm 0.1} $  & $62.3_{\pm 1.6} $  & $79.0_{\pm 2.5} $  & $75.0_{\pm 1.4} $ &  $74.8_{\pm 1.4} $ &  $76.8_{\pm 1.1} $ &  $71.69 $  \\
    & 3D InfoMax  & $69.1_{\pm 1.0} $  & $74.5_{\pm 0.7} $  & $64.4_{\pm 0.8}$  & $60.6_{\pm 0.7} $  & $79.9_{\pm 3.4} $  & $74.4_{\pm 2.4} $ &  $76.1_{\pm 1.3} $ &  $79.7_{\pm 1.5} $ &  $72.34 $ \\
    & MoleculeSDE  & $71.8_{\pm 0.7}$  & $76.8_{\pm 0.3}$  & $65.0_{\pm 0.2} $ & $60.8_{\pm 0.3}$  & $87.0_{\pm 0.5}$  & ${{80.9}_{\pm 0.3}}$ & ${{78.8}_{\pm 0.9}}$ & $79.5_{\pm 2.1}$ & $75.07$ \\
    & MoleBlend & ${{73.0}_{\pm 0.8}}$ & ${{77.8}_{\pm 0.8}}$ & ${{66.1}_{\pm 0.0}}$ & {${64.9}_{\pm 0.3}$} & ${{87.6}_{\pm 0.7}}$ & $77.2_{\pm 2.3}$ & {${79.0}_{\pm 0.8}$} & {${83.7}_{\pm 1.4}$} & ${{76.16}}$ \\

     
    \addlinespace[0.3ex]
    \cdashline{2-11}[0.5pt/2pt]
    \addlinespace[0.5ex]


    
    & MolFM & $72.9_{\pm 0.1}$ & $77.2_{\pm 0.7}$ & $64.4_{\pm 0.2}$ & $64.2_{\pm 0.9}$ & $79.7_{\pm 1.6}$ & $76.0_{\pm 0.8}$ & $78.8_{\pm 1.1}$ & $83.9_{\pm 1.1}$ & $74.62$ \\
    
    
    & GEM & $72.4_{\pm 0.4}$ & $78.1_{\pm 0.1}$ & $69.2_{\pm 0.4}$ & \firstval{$\mathbf{67.2}_{\pm 0.4}$} & \secval{$\mathbf{90.1}_{\pm 1.3}$} & $81.7_{\pm 0.5}$ & $80.6_{\pm 0.9}$ & \secval{$\mathbf{85.6}_{\pm 1.1}$} & $78.11$ \\

    & UniMol & $72.9_{\pm 0.6}$ & $79.6_{\pm 0.5}$ & $69.6_{\pm 0.1}$ & \secval{$\mathbf{65.9}_{\pm 1.3}$} & \firstval{$\mathbf{91.9_{\pm 1.8}}$} & \secval{$\mathbf{82.1_{\pm 1.3}}$} &
    \secval{$\mathbf{80.8_{\pm 0.3}}$} &
    \firstval{$\mathbf{85.7_{\pm 0.2}}$} &
    \secval{$\mathbf{78.56}$} \\


    \midrule
    
    & \textbf{C-FREE$_{\text{Sch-1C}}$} &
      \secval{$\mathbf{88.6}_{\pm 1.8}$} &
      \secval{$\mathbf{84.7}_{\pm 0.8}$} &
      {${71.3_{\pm 1.6}}$} &
      $61.8_{\pm 1.7}$ &
      $73.1_{\pm 1.7}$ &
      $75.9_{\pm 1.3}$ &
      $78.9_{\pm 1.8}$ &
      $78.8_{\pm 2.1}$ &
      $76.63$ \\

      & \textbf{C-FREE$_{\text{Sch-3C}}$} & {${78.9}_{\pm 1.1}$} & {${84.2}_{\pm 0.4}$} & \firstval{$\mathbf{71.7}_{\pm 0.9}$} & $62.5_{\pm 1.9}$ & $83.7_{\pm 2.9}$ & \firstval{$\mathbf{82.5}_{\pm 0.1}$} & $77.9_{\pm 1.2}$ & $78.6_{\pm 1.1}$ & ${{77.50}}$ \\
      
      & \textbf{C-FREE$_{\text{PaiNN-3C}}$} &
      \firstval{$\mathbf{88.9}_{\pm 0.7}$} &
      \firstval{$\mathbf{86.2}_{\pm 0.2}$} &
      \secval{$\mathbf{71.6}_{\pm 0.9}$} &
      $64.8_{\pm 0.9}$ &
      {${87.6}_{\pm 0.9}$} &
      $75.7_{\pm 1.2}$ &
      \firstval{$\mathbf{81.4_{\pm 4.7}}$} &
      $82.3_{\pm 0.7}$ &
      \firstval{$\mathbf{79.81}$} \\

    \bottomrule
    \end{tabular}
}
\end{sc}
\end{small}
\end{center}
\vspace{-1.2em}
\end{table*}

\subsection{Comparison with Frozen Backbones}
\label{sec:frozen}


For the first experiments, we compare our framework against state-of-the-art contrastive and non-contrastive self-supervised methods on molecular property classification tasks. Following \citet{ssleval2023wang}, we pretrain two backbones on 0.33M molecules from GEOM~\citep{geom2022axelrod} and evaluate them on MoleculeNet~\citep{moleculenet2018wu}. One backbone has 4M parameters and uses 2D inputs, while the other has 9.1M parameters and incorporates both 2D graphs and 3D conformer ensembles available in GEOM, with three additional conformers generated using RDKit~\citep{landrum2016rdkit} at fine-tuning. This setup enables fair comparison to 2D-only baselines while also testing the benefit of 3D information. Performance is reported as mean ROC-AUC over three scaffold splits, with frozen checkpoints chosen by best self-supervised loss and linear probes selected by downstream validation loss. We evaluate two strategies: linear probing on whole-graph embeddings and on subgraph embeddings aggregated with
DeepSets~\citep{deep2017zaheer}. We also run the same experiment on molecular property regression using the {Kraken dataset}~\citep{kraken2022gensch2022} (see Appendix~\cref{apx:kraken-probe}).

As shown in \cref{tab:ssl-compare}, our framework achieves the best average performance, outperforming baselines on 6 of 8 tasks. In the 2D-only setting, DeepSets aggregation provides clear gains and yields the best average result, with further improvements from adding 3D information. We also see strong gains on multi-task datasets such as Tox21, ToxCast, and MUV, suggesting that our model captures more generalizable features. While DeepSets helps in 2D-only, its effect is minimal in the multimodal case, likely because 3D inputs already encode rich structural detail.


\begin{table*}[!t]
\caption{Performance on QM9~\citep{ramakrishnan2014qm9} with full end-to-end fine-tuning, following the same protocol as in~\cite{ji2024unimol2}. In this setup, the full molecule is used as input to the downstream model.
\textbf{C-FREE$_\textsc{Ego}$} and \textbf{C-FREE$_\textsc{Murcko}$} denote the variant of C-FREE pretrained with EgoNets and Murcko segments respectively using only the 3D modality, while \textbf{C-FREE$_\textsc{MM}$} denotes the full multimodal variant used in~\cref{tab:ssl-compare}. \new{We additionally include \textbf{PaiNN (RND)}, a randomly initialized PaiNN model, as a baseline to disentangle the contribution of pretraining from PaiNN's inherent inductive bias.}
The evaluation metric is the Mean Absolute Error (MAE) ($\downarrow$). \firstval{\textbf{Red}} highlights the best results and \secval{\textbf{Blue}} the second best. \textbf{C-FREE} achieves the best results on 3 out of 6 datasets.}
\label{tab:qm9-ft}
\begin{center}
  \begin{small}
    \begin{sc}
      \resizebox{0.9\textwidth}{!}{%
        \begin{tabular}{lcccccc}
          \toprule
          \textbf{Method} & \textbf{Mu ($\downarrow$)} & \textbf{Alpha ($\downarrow$)} &
          \textbf{HOMO/LUMO/GAP ($\downarrow$)} & \textbf{$R^2$ ($\downarrow$)} &
          \textbf{$C_{v}$ ($\downarrow$)} & \textbf{ZPVE ($\downarrow$)} \\
          \midrule
          
          GEM & $0.444_{\pm 1.5e^{-3}}$ & $0.589_{\pm 4.2e^{-3}}$ & $0.0067_{\pm 4e^{-5}}$ & $25.67_{\pm 0.743}$ & $0.237_{\pm 1.4e^{-3}}$ & \secval{$\mathbf{0.0011_{\pm 2.0e^{-5}}}$} \\

          UniMol (1) & $0.155_{\pm 1.5e^{-3}}$ & $0.363_{\pm 9.0e^{-3}}$ & $0.0043_{\pm 2e^{-5}}$ & $4.805_{\pm 0.055}$ & $0.183_{\pm 2.0e^{-3}}$ & \secval{$\mathbf{0.0011_{\pm 3.0e^{-5}}}$} \\

          UniMol2 310M & $0.092_{\pm 1.3e^{-3}}$ & $0.315_{\pm 3.0e^{-3}}$ & \secval{$\mathbf{0.0036_{\pm 1e^{-5}}}$} & $4.672_{\pm 0.245}$ & $0.143_{\pm 2.0e^{-3}}$ & \firstval{$\mathbf{0.0005_{\pm 1.0e^{-5}}}$} \\

          UniMol2 1.1B & $0.089_{\pm 4.0e^{-4}}$ & $0.305_{\pm 3.0e^{-3}}$ & \firstval{$\mathbf{0.0035_{\pm 1e^{-5}}}$} & $4.265_{\pm 0.067}$ & $0.144_{\pm 2.0e^{-3}}$ & \firstval{$\mathbf{0.0005_{\pm 1.0e^{-5}}}$} \\

          \addlinespace[0.3ex]
          \cdashline{1-7}[0.5pt/2pt]
          \addlinespace[0.5ex]

          \new{\textbf{PaiNN (RND)}} &
           \firstval{$\mathbf{0.042_{\pm 2.3e^{-3}}}$}      &
           $0.149_{\pm 9.2e^{-3}}$      &
           $0.0055_{\pm 2e^{-3}}$    &
           $1.375_{\pm 0.223}$      &
           $0.061_{\pm 4.4e^{-3}}$      &
           $0.0122_{\pm 2.5e^{-3}}$
          \\

          \addlinespace[0.3ex]
          \cdashline{1-7}[0.5pt/2pt]
          \addlinespace[0.5ex]

          \textbf{C-FREE$_\text{3D-Ego}$} &
          {${0.064_{\pm 1.7e^{-4}}}$} &
          \secval{$\mathbf{0.116_{\pm 4.9e^{-4}}}$} &
          $0.0049_{\pm 7e^{-4}}$ &
          \secval{$\mathbf{1.155_{\pm 0.010}}$} &
          \secval{$\mathbf{0.049_{\pm 5.0e^{-4}}}$} &
          $0.0061_{\pm 4.1e^{-4}}$
          \\

          \textbf{C-FREE$_\text{3D-Murcko}$} &
          {${0.077}_{\pm 2.6e^{-3}}$} &
          {${0.124}_{\pm 9.0e^{-3}}$} &
          $0.0049_{\pm 5e^{-4}}$ &
          \firstval{$\mathbf{1.135}_{\pm 0.025}$} &
          {${0.052}_{\pm 2.5e^{-3}}$} &
          $0.0040_{\pm 2.0e^{-4}}$
          \\

          \new{\textbf{C-FREE$_\text{MM}$}} &
           \secval{$\mathbf{0.057_{\pm 2.5e^{-3}}}$}    &
           \firstval{$\mathbf{0.107_{\pm 3.3e^{-3}}}$}    &
           $0.0043_{\pm 1e^{-4}}$  &
           {${1.214_{\pm 0.068}}$}    &
           \firstval{$\mathbf{0.044_{\pm 1.3e^{-3}}}$}    &
           $0.0043_{\pm 9.0e^{-4}}$
          \\
          
          \bottomrule
        \end{tabular}
      }
    \end{sc}
  \end{small}
\end{center}
\end{table*}

\subsection{Full Fine-tuning}
Beyond evaluating frozen representations, we next assess the adaptability of our pretrained models through full end-to-end fine-tuning. In this setting, the entire backbone is updated jointly with a task-specific head, enabling us to test how pretraining improves convergence and downstream performance. We consider two scenarios: (i) property classification on MoleculeNet, where we compare against both contrastive, non-contrastive and multimodal self-supervised baselines, and (ii) property regression on the MARCEL benchmark, where we evaluate transferability to the Kraken dataset and study label efficiency on the larger Drugs-75K dataset.

\subsubsection{Full Fine-tuning for Property Prediction}
\label{sec:fft-molnet}

Building on the frozen backbone results, we next evaluate the adaptability of our models through end-to-end fine-tuning. {We compare against the baselines reported in \citet{moleblend2023yu}, reproducing their evaluation protocol by attaching a linear classifier head and fine-tuning the entire backbone on each MoleculeNet dataset. 
We report results for a backbone using a PaiNN~\citep{schuett2021painn} encoder, pretrained with an ensemble of conformers (\textbf{C-FREE$_{\text{PaiNN-3C}}$}).
For fairness, we also include a variant using SchNet~\citep{schnet2017schutt} (\textbf{C-FREE$_{\text{Sch-3C}}$}), as this is the 3D encoder used by the other baselines, as well as a variant pretrained with a single conformer (\textbf{C-FREE$_{\text{Sch-1C}}$}).
As in \cref{tab:ssl-compare}, our backbones are pretrained on the GEOM dataset~\citep{geom2022axelrod}, which contains approximately 330K molecules. This contrasts with all other baselines, which are trained on the substantially larger PCQM4Mv2 dataset from the OGB Large-Scale Challenge~\citep{hu2021ogb} (3M molecules), potentially placing our approach at a disadvantage.}

{We additionally include a suite of recent multimodal foundation models---MolFM~\citep{luo2023molfmmultimodalmolecularfoundation}, GEM~\citep{Fang2022gem}, and UniMol~\citep{zhou2023unimol}---and compare with their published results on the same downstream tasks. Note that these models are trained on substantially larger datasets: UniMol on 19M molecules aggregated from multiple sources, GEM on ZINC-20M, while MolFM relies on the PubChem dataset comprising 77M molecules.} This setting tests how well the pretrained representations adjust to dataset-specific distributions and whether the multimodal backbone provides additional benefits. 
Performance is reported as mean ROC-AUC over three scaffold splits.
{A rundown and explanation of each baseline is found in the Appendix~\cref{apx:related-work}.}

As shown in \cref{tab:ssl-compare-fft}, our method achieves notable gains on 5 of 8 datasets and delivers the best overall average when using the PaiNN variant. The strongest competitor, UniMol, also leverages multimodal inputs but was pretrained at a larger scale, namely on 19M molecules with roughly 209 million molecular conformations. Despite this large difference in pretraining scale, our approach matches or surpasses UniMol on several MoleculeNet tasks. \new{We note that conformer generation fails for the larger molecules present in SIDER; we handle these cases by substituting dummy coordinates, which likely underlies C-FREE's underperformance on this dataset.}

{To further compare against larger-scale models, we conduct additional experiments on QM9~\citep{ramakrishnan2014qm9} and ZINC~\citep{zinc2018gomez-bombarelli}.
For QM9, we follow the setup in~\citep{ji2024unimol2}, restricting C-FREE to its 3D encoder and comparing it with the Uni-Mol~\citep{ji2024unimol2} family, whose largest model contains $1.1$ billion parameters. \new{To isolate PaiNN's inductive bias, we add a randomly initialized PaiNN as baseline (PaiNN (RND)); we also include results for the multimodal variant
(C-FREE$_\text{MM}$) from~\cref{tab:ssl-compare} for completeness.}
Despite its substantially smaller size, our model outperforms Uni-Mol2 on four of the six prediction targets; the exceptions being HOMO/LUMO/GAP and ZPVE tasks, where Uni-Mol2 retains an advantage. \new{C-FREE further outperforms PaiNN on five out of the six targets, with PaiNN only showing a slight edge on Mu, confirming that C-FREE's gains stem from pretraining rather than architectural bias alone. Among C-FREE variants, the multimodal version achieves the best overall performance, though the 3D-only variant holds a slight advantage on R2 and HOMO/LUMO/GAP.}
On ZINC, we compare our approach with GraphJEPA and find that it achieves superior performance.
Further experimental details are provided in the Appendix \cref{apx:add-exps}.}

\subsubsection{Comparison with chemically-informed subgraph pretraining }
\label{sec:murcko}
We perform a comparison of our pretrained backbone to a variant trained using Murcko-based fragments~\citep{bemis1996murcko} on the QM9~\citep{ramakrishnan2014qm9} dataset in~\cref{tab:qm9-ft}. Murcko scaffolds capture the core ring systems of molecules and are commonly used to define chemically meaningful fragments. In this variant, the Murcko scaffold of each molecule serves as the target subgraph, while the remaining atoms and edges form the context, allowing us to assess the effect of using chemically informed fragments. We observe gains on the $R^2$ and ZPVE targets, whereas the EgoNet variant performs best on the remaining four tasks. Overall, these results suggest that our EgoNet-based approach already captures useful structural patterns without requiring hand-crafted chemical partitions, while remaining competitive across tasks.

\begin{table}[!t]
\begin{center}
    \caption{Fine-tuning on the Drugs-75K dataset~\citep{marcel2024zhu} with limited labeled data. We compare our pretrained backbone \textbf{(FFT)} against a model trained from scratch \textbf{(RND)} using 1\%, 10\%, 50\%, and 100\% of the labels. pretraining offers clear gains in low-label regimes, while performance is comparable when using the full dataset. }
    \label{tab:labels}
    \scalebox{0.8}{
    \begin{tabular}{llccc}
        \toprule
        & & IP $\downarrow$ & EA $\downarrow$ & $\chi$ $\downarrow$ \\
        \midrule
        \multirow{2}{*}{\rotatebox[origin=c]{90}{{1\%}}} 
        & RND & $0.638_{\pm 0.001}$ & $0.613_{\pm 0.002}$ & $0.334_{\pm 0.001}$  \\
        & FFT & $\mathbf{0.608}_{\pm 0.001}$ & $\mathbf{0.583}_{\pm 0.001}$ & $\mathbf{0.317}_{\pm 0.001}$ \\

        \addlinespace[0.3ex]
        \cdashline{2-5}[0.5pt/2pt]
        \addlinespace[0.5ex]
        
        \multirow{2}{*}{\rotatebox[origin=c]{90}{{10\%}}} 
        & RND & $0.561_{\pm 0.002}$ & $0.526_{\pm 0.001}$ & $0.277_{\pm 0.001}$\\
        & FFT & $\mathbf{0.520}_{\pm 0.005}$ & $\mathbf{0.494}_{\pm 0.002}$ & $\mathbf{0.267}_{\pm 0.001}$  \\

        \addlinespace[0.3ex]
        \cdashline{2-5}[0.5pt/2pt]
        \addlinespace[0.5ex]

        \multirow{2}{*}{\rotatebox[origin=c]{90}{{50\%}}} 
        & RND & $0.457_{\pm 0.002}$ & $0.433_{\pm 0.002}$ & $0.233_{\pm 0.001}$ \\
        & FFT & $\mathbf{0.454}_{\pm 0.001}$ & $\mathbf{0.421}_{\pm 0.002}$ & $\mathbf{0.230}_{\pm 0.001}$ \\

        \addlinespace[0.3ex]
        \cdashline{2-5}[0.5pt/2pt]
        \addlinespace[0.5ex]

        \multirow{2}{*}{\rotatebox[origin=c]{90}{{100\%}}} 
        & RND & $\mathbf{0.419}_{\pm 0.005}$ & $0.403_{\pm 0.002}$ & $\mathbf{0.211}_{\pm 0.003}$ \\
        & FFT & $\mathbf{0.419}_{\pm 0.002}$ & $\mathbf{0.395}_{\pm 0.003}$ & $0.213_{\pm 0.001}$ \\

        \bottomrule
        \end{tabular}
        }
  \end{center}
\end{table}

\subsubsection{Label-efficient fine-tuning}
\label{sec:drugs}
Building on the observation that fine-tuning outperforms supervised training, we next study label efficiency on the Drugs-75K dataset~\citep{marcel2024zhu}, a GEOM-Drugs subset with $75.099$ molecules and at least 5 rotatable bonds. \new{We note that while these molecules appear in the pretraining set, the pretraining objective is self-supervised and label-free; the model is never exposed to any downstream labels during pretraining. The molecular overlap therefore does not constitute data leakage in any meaningful sense.} For each molecule, Auto3D~\citep{auto3d} generates conformer ensembles, and three DFT-based reactivity descriptors serve as targets: ionization potential (IP), electron affinity (EA), and electronegativity ($\chi$). 

We fine-tune our pretrained backbone using 1\%, 10\%, 50\%, and 100\% of the available data and compare against a model trained fully supervised from random initialization. With the full dataset, performance is broadly comparable, which is reasonable given the scale of the dataset. More importantly, in low-data regimes, initializing from self-supervised pretraining provides clear gains, consistently outperforming training from scratch (\cref{tab:labels}).

\begin{table}[!t]
\caption{Modality ablation on the Kraken dataset (MAE $\downarrow$). 
Using the same pretrained backbone, we compare fine-tuning \textbf{(FFT)} to random initialization \textbf{(RND)}, feeding only the 2D sequence for the 2D variant and the 3D sequence for the 3D variant.
pretraining consistently outperforms training from scratch. The multimodal model delivers the strongest performance, with 3D-only close behind, indicating that 3D features have greater impact than 2D.}
\label{tab:modality-ablation}
\begin{center}
    \begin{sc}
    \scalebox{0.76}{
        \begin{tabular}{llcccc}
        \toprule
        &  & B5 $\downarrow$ & L $\downarrow$ & BurB5 $\downarrow$ & BurL $\downarrow$ \\
        \toprule
        \multirow{2}{*}{\rotatebox[origin=c]{90}{{2D}}} 
        & RND & $0.297_{\pm 0.006}$ & $0.396_{\pm 0.026}$ & $0.205_{\pm 0.006}$ & $0.152_{\pm 0.006}$ \\
        & FFT & $0.276_{\pm 0.012}$ & $0.340_{\pm 0.028}$ & $0.176_{\pm 0.002}$ & $0.146_{\pm 0.005}$ \\
        
        \addlinespace[0.3ex]
        \cdashline{2-6}[0.5pt/2pt]
        \addlinespace[0.5ex]

        \multirow{2}{*}{\rotatebox[origin=c]{90}{{3D}}} 
        & RND & $0.197_{\pm 0.006}$ & $0.345_{\pm 0.011}$ & $0.162_{\pm 0.006}$ & $0.135_{\pm 0.009}$\\
        & FFT & $\mathbf{0.194}_{\pm 0.003}$ & $0.329_{\pm 0.002}$ & $\mathbf{0.134}_{\pm 0.005}$ & $0.131_{\pm 0.004}$ \\
        \addlinespace[0.3ex]
        \cdashline{2-6}[0.5pt/2pt]
        \addlinespace[0.5ex]

        \multirow{2}{*}{\rotatebox[origin=c]{90}{{MM}}} 
        & RND & $0.203_{\pm 0.008}$ & $0.378_{\pm 0.003}$ & $0.161_{\pm 0.002}$ & $0.142_{\pm 0.001}$ \\
        & FFT & $\mathbf{0.193}_{\pm 0.017}$ & $\mathbf{0.306}_{\pm 0.011}$ & $\mathbf{0.134}_{\pm 0.009}$ & $\mathbf{0.126}_{\pm 0.004}$ \\
        \midrule
        \end{tabular}}
    \end{sc}
\end{center}
\vspace{-1.6em}
\end{table}

\subsection{Modality Ablation}
\label{sec:modality-ablation}
After establishing the benefits of our multimodal backbone, we next analyze the contribution of each modality through targeted ablations. Starting from the same GEOM-pretrained backbone, we fine-tune on Kraken while feeding either only the 2D encoder sequence, only the 3D sequence, or both. This setup keeps the architecture and pretraining signal fixed, isolating the effect of each modality and mimicking transfer scenarios where only 2D or 3D data is available.  
\cref{tab:modality-ablation} shows that pretraining consistently improves over random initialization across all settings, confirming that useful information is transferred even when restricted to a single modality. Among unimodal variants, the 3D-only backbone performs best, suggesting that geometric information has greater impact than 2D topology alone. Combining both modalities achieves the strongest results overall, reinforcing the view that 2D and 3D provide complementary signals.

\begin{figure}[!t]
  \centering
    \centering
    \includegraphics[width=.68\linewidth]{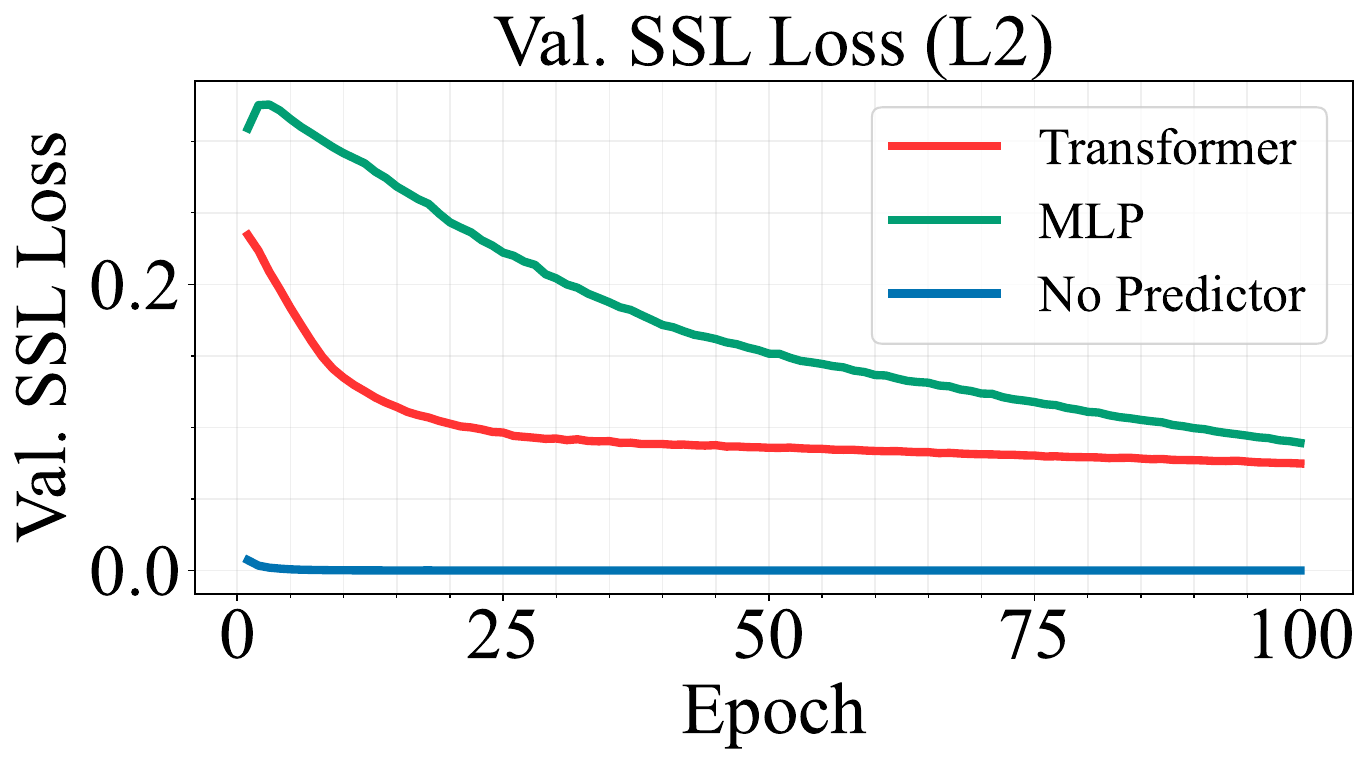}
  \\
    \centering
    \includegraphics[width=.68\linewidth]{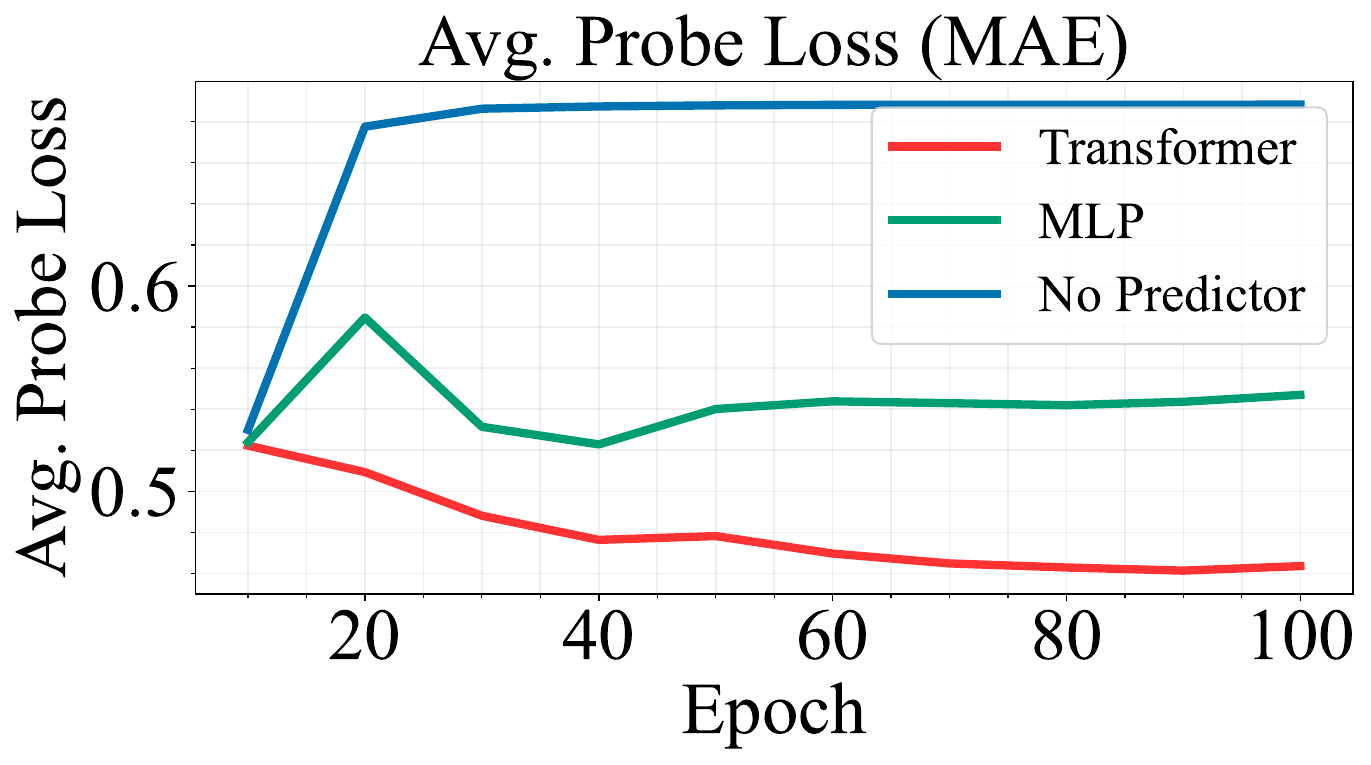}
    \caption{Predictor ablation. \textbf{Top}: SSL validation loss on GEOM during pretraining. \textbf{Bottom}: Average linear-probe MAE ($\downarrow$) on Kraken with frozen backbones. Without a predictor, training collapses (loss$\sim$0) and probes perform worst; an MLP predictor helps but underperforms, while a Transformer predictor achieves the best results.}  \label{fig:apx_collapse}
    \vspace{-0.6em}
\end{figure}

\subsection{Ablation on Predictor Types}
\label{sec:predictor-ablation}
We hypothesize that our model’s strong performance stems from the predictor network, which serves as a guiding signal to refine the representations produced by the encoder. To test this, we pretrain on GEOM following the same setup previously described and perform an ablation with three predictor variants: none, a linear predictor, and a transformer. We then evaluate downstream performance using either a linear probe or full fine-tuning. To keep the study computationally lightweight, we use the smaller 2D-only pretrained backbone. Since the predictor design is identical across variants, this enables focused analysis without excessive computational cost.

As shown in~\cref{fig:apx_collapse}, removing the predictor leads to poor downstream performance across regression tasks, with the self-supervised loss collapsing to zero. While the EMA target encoder stabilizes training~\citep{ema2017tarvainen, byol2020grill, jepa2023assran}, it is insufficient on its own: without an asymmetric architecture, the model collapses to trivial solutions. Adding a predictor breaks this symmetry and prevents collapse~\citep{byolexplain2020richemond}, consistent with theory showing that a trainable prediction head enables richer representations~\citep{predictor2022wen}. Even a simple MLP improves results, but the transformer performs best, likely because it operates at the node level before pooling, yielding more informative graph-level embeddings. 


\begin{table}[!t]
\caption{\new{
EMA decay rate ablation with linear probing on the Kraken dataset (MAE $\downarrow$). 
We pretrain models using the same architecture, fixing the final decay rate $\tau_T = 1$ and varying the initial decay rates $\tau_0 \in \{1.0, 0.995, 0.9, 0.5\}$, where $\tau = [\tau_0,  \tau_T]$ denotes the decay schedule. We include a randomly initialized model as baseline. Pretraining with $\tau_0 \neq 1.0$ consistently outperforms random initialization, and lower initial decay rates yield better results.}}
 
\label{tab:ema-ablation}
\begin{center}
    \begin{sc}
    \scalebox{0.66}{
        \begin{tabular}{lccccc}
        \toprule
        & B5 $\downarrow$ & L $\downarrow$ & BurB5 $\downarrow$ & BurL $\downarrow$ & Avg \\
        \toprule
        RND & $0.671_{\pm 0.001}$ & $0.667_{\pm 0.002}$ & $0.402_{\pm 0.002}$ & $0.242_{\pm 0.003}$ & $0.496$ \\
        
        \addlinespace[0.3ex]
        \cdashline{1-6}[0.5pt/2pt]
        \addlinespace[0.5ex]
        
        $\tau_0 = 1.0$ & $0.684_{\pm 0.002}$ & $0.667_{\pm 0.005}$ & $0.411_{\pm 0.002}$ & $0.244_{\pm 0.001}$ & $0.502$\\
        $\tau_0 = 0.995$ & $0.599_{ \pm 0.009}$ & $0.593_{\pm 0.011}$ & $0.371_{\pm 0.006}$ & $0.245_{\pm 0.003}$ & $0.456$ \\
        $\tau_0 = 0.9$ & $0.583_{\pm 0.003}$ & $\mathbf{0.564_{\pm 0.002}}$ & $0.339_{\pm 0.003}$ & $0.243_{\pm 0.010}$ & $0.430$ \\
        $\tau_0 = 0.5$ & $\mathbf{0.571_{\pm 0.001}}$ & $0.575_{\pm 0.002}$ & $\mathbf{0.335_{\pm 0.001}}$ & $\mathbf{0.235_{\pm 0.003}}$ & $\mathbf{0.428}$ \\

        \midrule
        \end{tabular}}
    \end{sc}
\end{center}
\end{table}

\subsection{Ablation on EMA Decay Rate}
\label{sec:ema-ablation}
\new{Having established that the asymmetric nature of our architecture helps mitigate representational collapse during pretraining, we next turn to the EMA decay rate to study its influence on representation quality. For tractability, we pre-train on the smaller Drugs-75k dataset and extract a single 3-EgoNet subgraph per molecule.  We train four models using $\tau_0 \in \{1.0, 0.995, 0.9, 0.5\}$, fixing $\tau_T = 1$ where $\tau = [\tau_0,  \tau_T]$ denotes the decay schedule. Once converged, we checkpoint each model and evaluate via linear probing on the Kraken dataset following the same protocol as in~\cref{sec:predictor-ablation}}.


\new{As shown in~\cref{tab:ema-ablation}, $\tau_0 = 1.0$ (no EMA) yields the worst performance, suggesting the model fails to learn meaningful representations without a momentum teacher, likely due to training loss collapse (see~\cref{apx:ema-ablation-plot}). Lower decay values consistently improve performance, suggesting that loosening the coupling between teacher and context encoder encourages more diverse target representations. While lower decay rates achieve superior results, we adopt the more conservative $\tau_0 = 0.995$ in our main experiments to prioritize training stability; see~\cref{apx:ema-ablation-plot} for details.}

\subsection{Ablation on k-EgoNet}
\label{sec:k-ablation}
\new{
To provide insight into the effect of the ego-net radius parameter $k$, we compute structural features on subgraphs generated for $k = \{1, 2, 3, 4, 5\}$ over 200 randomly sampled molecules from the GEOM dataset, summarized in Appendix~\ref{apx:k-ego-stats}. 
Alongside this analysis, we ablate $k$ directly by pretraining models with 1-, 3-, and 5-EgoNets, keeping the architecture and training setup fixed as in~\cref{sec:ema-ablation}.} 

\new{\Cref{tab:k-ablation} demonstrates that pretraining with $k=5$ achieves the best performance, suggesting that the model learns richer representations when the target and context views are of comparable size. Conversely, $k=1$ yields the worst performance, indicating that overly local subgraphs do not capture sufficient structural signal to guide meaningful representation learning.}

\begin{table}[!t]
\caption{
\new{
Ablation on the k-EgoNet radius parameter $k$, with linear probing on the Kraken dataset (MAE $\downarrow$). Using the same architecture, we pretrain models extracting only a single k-EgoNet per molecule for $k \in \{1, 3, 5\}$ and include a randomly initialized model as baseline. The best performance is achieved by $k=5$, suggesting that the model learns the best representations for comparable sized context and target subgraphs, $k=1$ shows a similar performance to the randomly initialized model, suggesting that overly local subgraphs do not capture sufficient structural signal to guide meaningful representation learning.}
}
\label{tab:k-ablation}
\begin{center}
    \begin{sc}
    \scalebox{0.66}{
        \begin{tabular}{lccccc}
        \toprule
        & B5 $\downarrow$ & L $\downarrow$ & BurB5 $\downarrow$ & BurL $\downarrow$ & Avg \\
        \toprule
        RND & $0.671_{\pm 0.001}$ & $0.667_{\pm 0.002}$ & $0.402_{\pm 0.002}$ & $0.242_{\pm 0.003}$ & 0.496 \\
        
        \addlinespace[0.3ex]
        \cdashline{2-6}[0.5pt/2pt]
        \addlinespace[0.5ex]
        $k=1$ & $0.677_{\pm 0.001}$ & $0.648_{\pm 0.009}$ & $0.444_{\pm 0.001}$ & $0.239_{\pm 0.002}$ & $0.491$  \\
        $k=3$ & $0.599_{ \pm 0.009}$ & $0.593_{\pm 0.011}$ & $0.371_{\pm 0.006}$ & $0.245_{\pm 0.003}$ & $0.456$ \\
        $k=5$ & $0.558_{\pm 0.002}$ & $0.601_{\pm 0.014}$ & $0.365_{\pm 0.001}$& $0.242_{\pm 0.002}$ & $0.438$ \\
        
        \midrule
        \end{tabular}}
    \end{sc}
\end{center}
\vspace{-1.6em}
\end{table}

\section{Conclusions and Future Work}
We introduced C-FREE, a contrast-free multimodal self-supervised framework for molecular representation learning. Its core idea is to align embeddings of complementary subgraphs, enabling predictive pretraining without negatives, positional encodings, or costly clustering. By combining 2D molecular graphs and 3D conformer ensembles, C-FREE achieves state-of-the-art results on classification (MoleculeNet) and regression (Kraken, Drugs-75K and QM9), with clear gains in low-label regimes. Ablations show that the EMA scheduler and the predictor are critical to avoid collapse and that 2D and 3D provide complementary signals, with 3D offering stronger performance but 2D remaining competitive. Finally, experiments on the EXP benchmark confirm that C-FREE is more expressive than $1$-WL.

Looking ahead, several extensions are promising. Our largest model has only 9.1M parameters and was pretrained on 0.33M molecules; scaling to larger architectures and datasets, as the 100M-molecule collections in~\citep{beaini2024towards}, could unlock further gains. Another extension is combining self-supervised pretraining with an additional supervised stage on such large-scale data before transferring to downstream tasks. Extending to new modalities, such as SMILES strings, is also appealing, though it may require modality-specific masking.
\new{Furthermore, our encoder-agnostic framework naturally supports encoders with domain-specific inductive biases, such as chirality-aware models, which can either replace or complement existing encoders as an additional modality.}
Finally, more chemically informed subgraph selection or alternative alignment objectives could further enrich representations.

\newpage
\newpage

\section*{Acknowledgements}

The authors would like to thank the anonymous reviewers for their insightful feedback.
The authors acknowledge funding by the Ministry of Science, Research and the Arts Baden-Württemberg via the Artificial Intelligence Software Academy (AISA).
AM and MN acknowledge the support from the International Max Planck
Research School for Intelligent Systems (IMPRS-IS). AM acknowledges funding by the EU Horizon project ELIAS (No. 101120237). BA acknowledges support from the ELLIS Unit Stuttgart. The authors are also grateful to Samir Darouich for his assistance with code development and for his chemistry expertise, which helped address technical questions that arose during this work.

\section*{Impact Statement}
\new{This paper presents work aimed at advancing molecular representation learning, with direct applications to drug discovery and computational chemistry. By improving the quality of learned molecular representations, our method has the potential to accelerate the early stages of drug development, including hit identification and lead optimization. We do not foresee any immediate negative societal consequences of this work.}


\bibliography{cfree}
\bibliographystyle{icml2026}

\newpage
\appendix
\onecolumn

\section{Supplementary Materials}
\subsection{Expressiveness and Invariance}
\label{apx:wl}

We preserve the invariance properties of the encoders used in our framework. In particular, we restate the result of \citet{molmix2024manolache} for our setting, restricted to 2D graphs and 3D conformers:  
\newline

\begin{lemma}  
\label{lem:invariance}  
    Let $G$ be a 2D molecular graph and $\{c_1, \dots, c_k\}$ a set of $k$ 3D conformers for the same molecule. Let $\hat{y} = f_\theta(G, \{c_1, \dots, c_k\})$ be the encoder output as defined in~\cref{sec:method}. Suppose the 3D encoder is invariant to the actions of a group $\mathcal{G}$. Then $f_\theta$ is also invariant to any $T_1, \dots, T_k \in \mathcal{G}$, i.e.,
    \[
    f_\theta(G, \{T_1 c_1, \dots, T_k c_k\}) = f_\theta(G, \{c_1, \dots, c_k\}).
    \]  
\end{lemma}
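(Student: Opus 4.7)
The plan is to propagate invariance through the computational pipeline of $f_\theta$ laid out in Section~\ref{sec:method}, block by block. The group actions $T_i$ only modify the 3D coordinates and leave the 2D graph $G$ untouched, so the GINE encoder produces exactly the same node embeddings $\{\mathbf{h}^{2D}_v\}$ on both inputs. For each conformer $c_i$ processed independently by SchNet, the hypothesized $\mathcal{G}$-invariance yields $\{\mathbf{h}^{3D}_{v,T_i c_i}\} = \{\mathbf{h}^{3D}_{v,c_i}\}$, and since the conformers are encoded independently of one another, this per-conformer invariance tensorizes across the full set of $k$ conformers.

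Next, I would reassemble the multimodal sequence $\mathbf{H}$ as defined in Section~\ref{sec:method}. Its three content blocks --- the fixed CLS/SEP tokens, the 2D embeddings, and the per-conformer 3D embeddings --- coincide on the two inputs; the learnable modality embeddings depend only on token type and not on coordinates, so the fully assembled sequence is identical in both cases. Feeding the same sequence through the deterministic Transformer then produces the same final $\mathbf{h}^{out}_{CLS}$, which is $\hat{y}$, giving $f_\theta(G, \{T_1 c_1, \dots, T_k c_k\}) = f_\theta(G, \{c_1, \dots, c_k\})$ as claimed.

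The main subtlety --- more a matter of care than a real obstacle --- is being precise about what ``$\mathcal{G}$-invariance of the 3D encoder'' means in our setting. It must be interpreted as invariance of the per-atom embeddings that we splice into the sequence, not merely of a pooled graph-level representation; the downstream Transformer sees the node-level tokens, so any weaker invariance would not propagate. Since SchNet builds atom embeddings from pairwise interatomic distances, this stronger node-level invariance holds automatically for the rigid motions of $\mathbb{R}^3$ that are typically considered here, and the argument then reduces to a direct substitution with no quantitative estimates needed.
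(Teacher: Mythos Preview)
Your proposal is correct. The paper itself does not spell out a proof of this lemma; it simply defers to \citet{molmix2024manolache}, noting that the result there applies directly once the SMILES modality is dropped. Your block-by-block propagation of invariance through the pipeline (GINE unaffected, SchNet invariant per conformer, identical multimodal sequence, deterministic Transformer) is exactly the kind of argument that reference contains, and your remark about needing node-level rather than pooled invariance of the 3D encoder is a useful clarification that the paper leaves implicit.
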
  

The proof of this result is given in \citet{molmix2024manolache}; it applies directly in our case after removing the SMILES modality.

Next, we provide the proof for Lemma~\ref{lemma:wl} in the main paper.
\newline

\begin{lemma}
    Let C-FREE$_{\text{DS}}$ be a model as defined in~\cref{sec:method} with a subgraph encoder $f_\theta$ consisting of a $1$-WL MPNN (e.g., GIN/GINE) followed by a Transformer without positional encodings, and a DeepSets task head $DS$. For any $k$-EgoNet policy with $k\geq1$ under the assumptions of Theorem 2 from~\citep{esan2022bevilacqua}, C-FREE$_{\text{DS}}$ is as expressive as ESAN~\citep{esan2022bevilacqua} with an EGO policy, therefore it is at most as expressive as \textit{DS-WL} and strictly more expressive than $1$-WL.
\end{lemma}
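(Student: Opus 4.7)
The plan is to exhibit a structural simulation between C-FREE$_{\text{DS}}$ and ESAN with an EGO policy, after which Theorem 2 of~\citep{esan2022bevilacqua} applies verbatim to give both the upper bound by DS-WL and the strict separation from $1$-WL. I would organize the argument in three matching steps (subgraph policy, per-subgraph encoder, aggregation head).

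Step 1 (\emph{policy match}): ESAN's EGO policy with radius $k$ produces, for each graph $G=(V,E)$, the bag of rooted $k$-ego-nets $\{E_k(v) : v \in V\}$ with $v$ marked as root. At fine-tuning, C-FREE$_{\text{DS}}$ extracts exactly the same family of $k$-EgoNets over all nodes, and the GINE encoder operates on atom features that are preserved across subgraph extraction; the CLS/SEP tokens are shared constants and do not interfere with rootedness. Hence the bags of (rooted) subgraphs processed by the two architectures coincide, satisfying the policy assumption of Theorem 2.

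Step 2 (\emph{per-subgraph encoder is at least as expressive as ESAN's base encoder}): GINE on its own already realizes a $1$-WL MPNN on each subgraph, which is what ESAN's Theorem 2 requires of its base encoder. The Transformer placed on top, with no positional encoding, takes the sequence of GINE node embeddings (augmented by the learnable CLS/SEP/modality tokens, which are graph-independent constants) and is permutation-equivariant in the node slots. Its CLS-token output is therefore a continuous permutation-invariant function of the multiset of node embeddings. By universal approximation of set-invariant Transformers, this readout can realize any continuous invariant function, and in particular the sum/mean readouts used by ESAN's base encoder. Consequently $f_\theta$ distinguishes at least as many rooted ego-nets as ESAN's base encoder.

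Step 3 (\emph{aggregation head match}): Both architectures aggregate subgraph embeddings by DeepSets, which is a universal invariant set function approximator~\citep{deep2017zaheer}, so no distinguishing power is lost or gained in the final head. Chaining Steps 1--3, any function computable by ESAN with EGO policy under the hypotheses of Theorem 2 is also computable by C-FREE$_{\text{DS}}$, and conversely the function class of C-FREE$_{\text{DS}}$ is invariant with respect to the same DS-WL equivalence. Invoking Theorem 2 then yields that C-FREE$_{\text{DS}}$ is at most as expressive as DS-WL and strictly more expressive than $1$-WL.

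The main obstacle is Step 2: one needs a careful argument that the CLS-token readout of a Transformer without positional encodings, sharing its sequence with the learnable SEP and modality tokens, faithfully implements the permutation-invariant pooling assumed by ESAN's base encoder. Making this watertight requires citing a universal approximation result for set-invariant self-attention and checking that the auxiliary (constant) tokens do not break the permutation symmetry over the node-embedding coordinates. Once this is established, Steps 1 and 3 are direct identifications, and the statement follows by reduction to Theorem 2 of~\citep{esan2022bevilacqua}.
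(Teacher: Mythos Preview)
Your proposal is correct and follows essentially the same route as the paper: both arguments reduce to Theorem~2 of \citep{esan2022bevilacqua} after verifying that (i) the $k$-EgoNet policy coincides with ESAN's EGO policy, (ii) the per-subgraph encoder $f_\theta$ is at least as powerful as a $1$-WL MPNN, and (iii) the DeepSets head matches ESAN's aggregation. The only notable difference is in Step~2: the paper dispatches the Transformer by observing that it can approximate the identity map (hence cannot \emph{lower} the MPNN's expressive power), whereas you invoke universal approximation of set-invariant self-attention to argue it can realize ESAN's pooling readout; both justifications suffice, and the paper's is somewhat lighter-weight.
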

\begin{proof}
    Fix a $k$-EGO policy $\pi=\text{EGO}_k$ with $k\geq1$ and let $S_\pi(G)$ be the multiset of $k$-ego-nets with their complements (edge-covering in the ESAN sense). Define:
$$
f_{C\text{-}FREE}(G)=DS\big(\{f_\theta(S) : S\in S_\pi(G)\}\big),
$$

Due to~\cref{lem:invariance}, we have that $f_\theta$ maintains the permutation equivariance of the MPNN; moreover, since there exists a parametrization of the Transformer that can approximate the identity map arbitrarily well, the Transformer does not lower the expressive power of the MPNN. We therefore have that $f_\theta$ is as powerful as $1$-WL.

Since we have a DeepSets encoder $DS$ and an edge-covering $k$-EGO policy, we can use the same proof argument as in Theorem 2 from~\citep{esan2022bevilacqua}, i.e. we apply $f_\theta$ to each $S\in S_\pi(G)$ and then aggregate the multisets with $DS$, therefore $f_{C\text{-}FREE}$ simulates ESAN, and is at most as expressive as \textit{DS-WL} and strictly more expressive than $1$-WL.

\end{proof}

\begin{table}[!t]
  \begin{minipage}[t][][b]{0.47\textwidth}
  \centering
    \vspace{1em}
    \includegraphics[width=.7\textwidth,trim=15 15 15 15,clip]{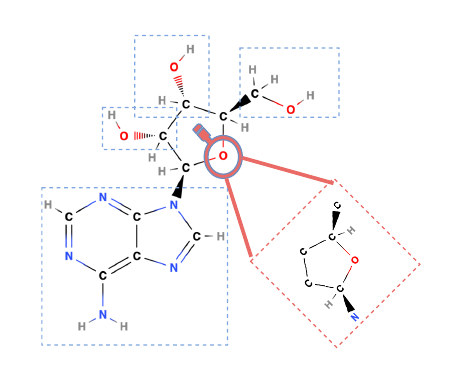}
    \vspace{3em}
    \captionsetup{type=figure}
    \caption{To generate subgraphs, we sample a random node from the original graph (here, the oxygen atom) and extract its 2-EgoNet as the context subgraph (outlined by red square). The remaining components (outlined by blue squares) constitute the target subgraph.}
    \vspace{-2em}
\label{fig:subgraph}
  \end{minipage}
  \hfill \hfill
    \begin{minipage}[t][][b]{0.47\textwidth}
    \vspace{1em}
      \caption{Expressiveness results on EXP~\citep{exp2021abboud}, where 1-WL GNNs cannot surpass random guessing. Even the 1-EgoNet variant of C-FREE approaches the theoretical upper bound, while 2- and 3-EgoNet variants achieve the highest accuracy, outperforming GINE~\citep{attrmask2020hu} and GraphJEPA~\citep{jepa2023assran}. Unlike GraphJEPA, our method avoids costly METIS clustering~\citep{metis1998karypis}.}
    \centering 
    \label{tab:expressiveness}
        \scalebox{0.76}{
            \begin{tabular}{lc}
            \toprule
            Method & Accuracy ($\uparrow$) \\
            \midrule
            GINE & $50.69_{\pm 1.39}$ \\
            GraphJEPA & $98.77_{\pm 0.99}$  \\
            C-FREE (1-Ego) & $96.03_{\pm 1.22}$  \\
            C-FREE (2-Ego) & $\mathbf{99.33}_{\pm 0.18}$  \\
            C-FREE (3-Ego) & $99.08_{\pm 0.20}$  \\
            \bottomrule
            \end{tabular}
            }
            \vspace{2em}
            \\
    \caption{{Performance on ZINC}~\citep{zinc2018gomez-bombarelli}{with full end-to-end fine-tuning. C-FREE outperforms GraphJEPA despite not relying on positional encoding.}}
    \label{tab:zinc-ft}
    \begin{center}
      \begin{small}
        \begin{sc}
          \resizebox{0.5\textwidth}{!}{%
            \begin{tabular}{lc}
              \toprule
              \textbf{Model} & \textbf{ZINC ($\downarrow$)} \\
              \midrule
              GraphJEPA & {$0.434_{\pm 0.014}$} \\
              \textbf{C-FREE (Ours)} & {$\mathbf{0.204_{\pm 0.015}}$} \\
              \bottomrule
            \end{tabular}
          }
        \end{sc}
      \end{small}
    \end{center}
    \vspace{-2em}
  \end{minipage}
\end{table}

\subsection{Theory Alignment}
\label{apx:exp:wl}
To validate our theoretical findings with an experiment on the EXP dataset, designed by \citet{exp2021abboud} such that any 1-WL GNN cannot exceed random guessing. We train a smaller version of our encoder and compare it against GINE~\citep{attrmask2020hu} and GraphJEPA~\citep{jepa2023assran}. Results are averaged over three runs with resampled EgoNets. As shown in~\cref{tab:expressiveness}, even the 1-EgoNet variant of C-FREE approaches the theoretical upper bound, while 2- and 3-EgoNet variants achieve the highest accuracy, outperforming both GINE and GraphJEPA, the latter relying on the costly METIS~\citep{metis1998karypis} algorithm. Nevertheless, the practical significance of expressiveness for molecular learning is limited, as recent work~\citep{pellizzoni2025graph} shows that $1$-WL already distinguishes most samples in molecular datasets nearly perfectly. Although less decisive for molecules, expressiveness may be more important in other domains, where the flexibility of our framework could be advantageous.

\subsection{Target and Context Views Generation using k-EgoNets}
An example of context and target views is shown in Figure \ref{fig:subgraph}, where the Oxygen molecule serves as the anchor and its 2-EgoNet is extracted.

\subsection{Exclusion of 1D Modality}
\label{app:modalities}
\new{While SMILES strings are included as a third modality in~\citet{molmix2024manolache}, we chose to exclude them here for both practical and conceptual reasons. Practically, our self-supervised objective masks subgraphs defined by graph topology via $k$-EgoNets, and since SMILES is a sequential serialization of the molecular graph, there is no clean correspondence between a masked subgraph and a contiguous region of the SMILES string. Designing a masking strategy that consistently occludes the same chemical information across all modalities, without leaking structural cues into the remaining views, is therefore non-trivial. Conceptually, SMILES and 2D molecular graphs encode largely redundant information, a finding consistent with the supervised multimodal setting of~\citet{molmix2024manolache}, where the 1D modality contributed least to overall performance.}

\new{\subsection{Ablation on Separator Tokens}}
Given the permutation equivariance of the transformer, separator tokens are not strictly necessary. Our design choice was primarily motivated by \citet{molmix2024manolache}, whose formulation we followed for consistency, as the computational cost is negligible. However, we hypothesize that separator tokens may serve as attention sinks~\citep{xiao2024efficient}, which could be beneficial in our setting since we do not use positional encodings. To investigate this, we conducted an ablation study using the setup described in~\ref{sec:ema-ablation}, training two model variants: one with separator tokens and one without. Results in~\cref{tab:separator-token-ablation} show a slight but statistically insignificant improvement with separator tokens.

\subsection{Ablation on EMA decay rate}
\label{apx:ema-ablation-plot}
\new{We complement the decay rate ablation with \Cref{apx:fig:ema-loss}, which presents the pretraining loss across different initial decay rates. The training loss exhibits notable sensitivity to the choice of the EMA decay rate. All three variants begin with identical loss values but diverge during training. A high decay rate ($\tau_0 = 1$) causes abrupt loss collapse after the first epoch, after which the loss remains static. In contrast, a lower decay rate ($\tau_0 = 0.8$) produces noisier but faster convergence. The standard C-FREE setting ($\tau_0 = 0.995$) achieves slower yet smoother convergence.}

\begin{table}[!t]
\begin{minipage}[t][][b]{0.47\textwidth}
    \vspace{4em}
    \caption{\new{Separator tokens ablation with linear probing on the Kraken dataset (MAE $\downarrow$). 
    Models were pretrained using identical architectures, with one variant including separator tokens between modalities in the transformer sequence and another without. Results show a modest improvement with separator tokens, suggesting they may function as attention sinks.}}
    \label{tab:separator-token-ablation}
    \begin{center}
    \begin{sc}
    \scalebox{0.7}{
        \begin{tabular}{cc}
        \toprule
        & Kraken (Avg) \\
        \toprule
        WITH & $0.405_{\pm 0.010}$ \\
        WITHOUT & $\mathbf{0.449_{\pm 0.005}}$\\
        \midrule
        \end{tabular}}
    \end{sc}
    \end{center}
\end{minipage}
\hfill \hfill
\begin{minipage}[t][][b]{0.47\textwidth}
    \centering
    \vspace{1em}
    \includegraphics[width=.8\textwidth]{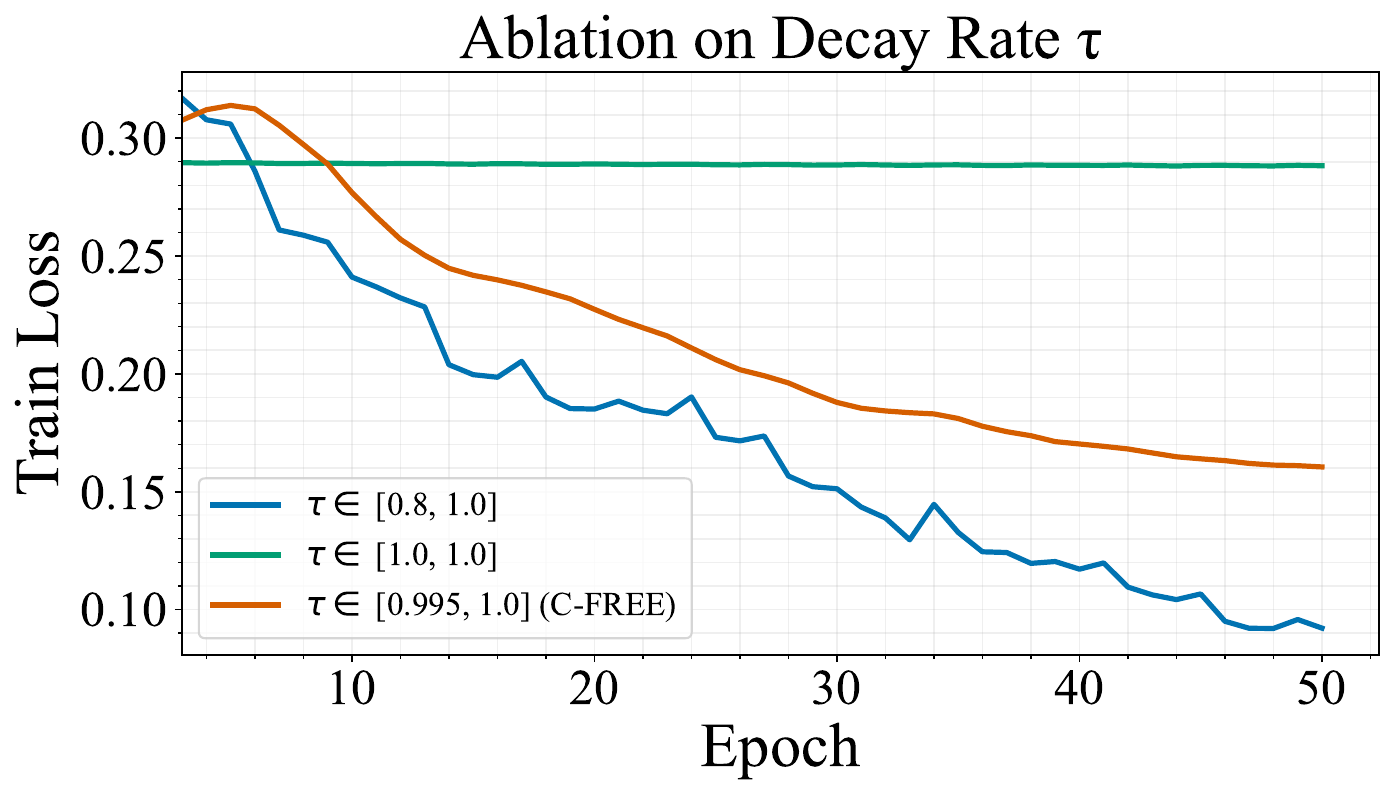}
    \captionsetup{type=figure}
    \caption{\new{EMA decay rate ablation: SSL pretraining loss on GEOM. Models with no EMA exhibit training collapse, while slower EMA decay rates achieve better convergence with increased noise.}}
    \label{apx:fig:ema-loss}
\end{minipage}
\end{table}

\subsection{Additional Details on Empirical Evaluation}
\label{apx:exp_details}
For the multimodal variant, the context encoder consists of three components: a 6-layer GINE~\citep{gin2019xu} with hidden dimension 128, a PaiNN (or SchNet)  with hidden dimension 128, 6 interaction steps, and a cutoff of 10, and 6 Transformer layers with 8 heads and hidden dimension 512. For the pretrained 2D variant, we use the same GINE configuration and the same number of Transformer layers and heads, but reduce the hidden dimension to 387. In both variants, the predictor is implemented as 4 Transformer layers with 4 heads each. The parameters are updated via backpropagation using the Adam optimizer, while the target encoder is updated through an exponential moving average (EMA) schedule, with the decay rate $\tau_t$ gradually increasing from 0.995 to 1.0 over the course of training.

Since the EMA decay reaches $\tau_t =1$ in the final epoch, the context and target encoders converge to identical parameters. Nevertheless, we follow~\cite{jepa2023assran} and report results using the target encoder.

For the choice of the scheduler we opt for a cosine scheduler without warmup. We notice that using a very small learning rate prevented convergence, while a moderate learning rate caused an early loss drop followed by stagnating representations. Adding a warmup phase allows the model to adapt gradually before the cosine decay, improving stability and representation learning. Thus we begin with a learning rate of $2 \times 10^{-6}$ over 30 epochs warmup up to $5 \times 10^{-5}$ and a patience of 50 epochs. For the batch size we use 256 and a weight decay of 0.04 and train for 200 epochs.
 
All experiments were performed on a mix of Nvidia A100/RTX 4090 GPUs and AMD EPYC 7713/Intel Xeon W-2225 CPUs for both the pretraining and downstream experiments. All experiments consumed a total of approximately 500 GPU hours, with the longest compute being consumed on the pretraining backbone run on the GEOM dataset with a total of 25 hours.

\subsection{Downstream Pipeline}
{In \cref{fig:apx_downstream}, we illustrate the downstream pipeline. We retain the target encoder from pretraining and attach a prediction head. For the linear probe using the full molecule, we feed the entire molecule into the encoder, perform a single forward pass, and apply a linear head for prediction. For the subgraph-based variant, each subgraph is processed independently by the encoder; their representations are then aggregated using a DeepSets module, and the resulting aggregated embedding is passed through a linear head for the final prediction.}

\begin{figure}[!t]
    \centering
    \begin{minipage}[t]{0.49\textwidth}
    \centering
    \includegraphics[width=0.95\textwidth]{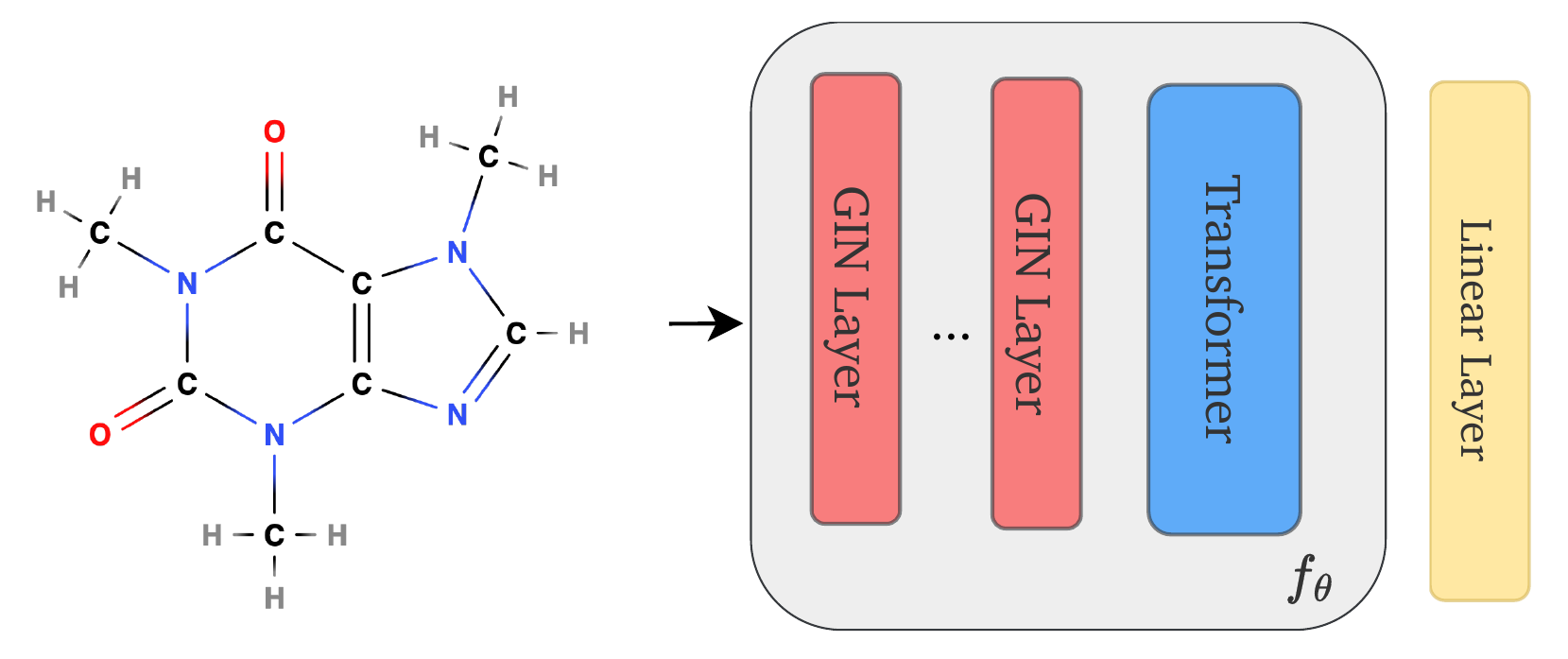}
    \end{minipage} \hfill
    \begin{minipage}[t]{0.49\textwidth}
    \centering
    \includegraphics[width=\textwidth]{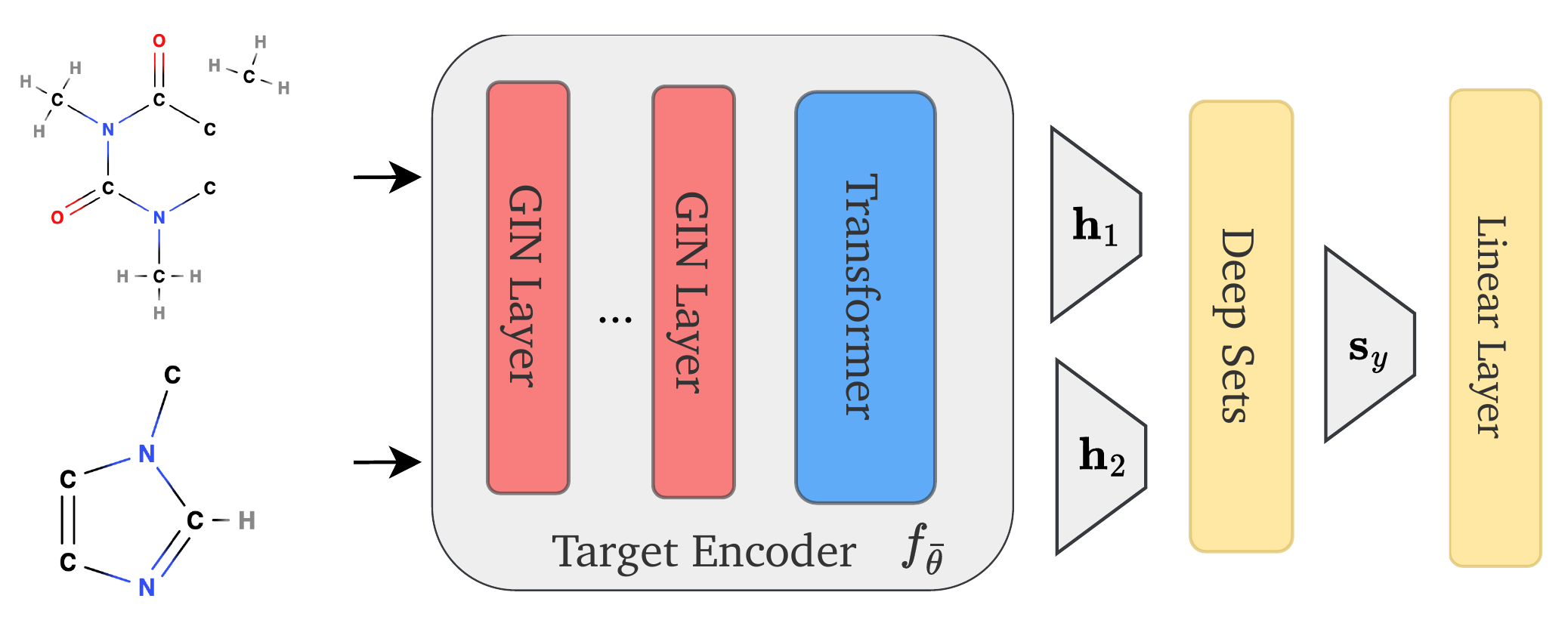}
    \end{minipage}
    \caption{{Fine-tuning strategies for C-FREE. Left: C-FREE$_{\text{MOL}}$: use the pretrained encoder to produce a whole-molecule embedding and attach a lightweight task head (e.g., linear probe or small MLP) for downstream prediction. Right: C-FREE$_{\text{DS}}$: aggregate multiple $k$-EgoNet subgraph embeddings with a DeepSets aggregator and apply a lightweight task head to the aggregated representation. For evaluation we typically use a linear probe on a frozen backbone to assess representation quality; for downstream tasks (regression, classification) the probe can be replaced by a task-specific head and the backbone optionally fine-tuned.}}
    \label{fig:apx_downstream}
\end{figure}

\subsection{Backbone Parameter Efficiency}
\label{apx:param-numbers}
\cref{apx:tab:params} compares the number of trainable parameters across different SSL backbones. For our method, we report both the total parameters and the encoder-only count used during downstream evaluation. This distinction arises because only the target encoder is retained as the backbone, during downstream tasks, while the predictor is discarded. As a result, nearly half of the parameters are removed at this stage, allowing our method to remain competitive without increasing the parameter load for downstream evaluation, further underscoring its efficiency.

\begin{table}[!t]
\caption{Computational efficiency of different SSL methods from~\cite{ssleval2023wang}, showing the number of trainable parameters for each backbone. We report both the total parameters of our backbone and those of the encoder alone, since only the latter is used for downstream evaluation. By discarding nearly half of the backbone parameters in this stage, our approach remains competitive without increasing the parameter count for downstream tasks, further highlighting its efficiency.}
\begin{center}
    \begin{sc}
    \scalebox{0.8}{
        \begin{tabular}{lc}

        \toprule
\textbf{Method} & \textbf{\#Parameters (Million)} \\ 
\midrule
EdgePred   & 7.46 \\ 
AttrMask   & 7.61 \\ 
GPT-GNN    & 7.61 \\ 
InfoGraph  & 7.82 \\ 
GROVER     & 7.57 \\ 
Cont.Pred  & 12.00 \\ 
GraphCL    & 8.19 \\ 
JOAO       & 8.19 \\ 
GraphMVP   & 15.84 \\ 
\textbf{C-FREE\(_{\text{2D}}\) (Full)} &  8.09 \\ 
\textbf{C-FREE\(_{\text{2D}}\) (Encoder)} & 4.67 \\
\textbf{C-FREE\(_{\text{MM}}\) (Full)} & 14.65 \\
\textbf{C-FREE\(_{\text{MM}}\) (Encoder)} & 9.12 \\
\midrule
\end{tabular}
}
\end{sc}
\end{center}
\label{apx:tab:params}
\end{table}

\subsection{Analysis of Subgraphs for Different Ego-Net Sizes}
\label{apx:k-ego-stats}
\new{To provide insight into the effect of ego-net radius $k$, we compute structural features on subgraphs generated for $k = \{1, 2, 3, 4, 5\}$ over 200 randomly sampled molecules from the GEOM dataset. The sampled molecules have an average size of $\mu_{\text{atoms}} = 42$ and an average connectivity of $\mu_{\text{deg}} = 1.043$. \Cref{apx:tab:stats} reports, for each value of $k$, the mean structural features aggregated over all context subgraphs: the number of atoms ($\mu_{\text{atoms}}$), the number of bonds ($\mu_{\text{bonds}}$), the connectivity defined as the ratio of bonds to atoms ($\mu_{\text{deg}}$), and the coverage defined as the fraction of atoms in the context subgraph relative to the full molecule ($\mu_{\text{cov}}$).}

\begin{table}[!t]
\caption{Mean structural features of context subgraphs for different ego-net sizes $k$.}
\label{apx:tab:stats}
\centering
\begin{tabular}{lcccc}
\toprule
 & $\mu_{\text{atoms}}$ & $\mu_{\text{bonds}}$ & $\mu_{\text{deg}}$ & $\mu_{\text{cov}}$ \\
\midrule
$k = 1$ & $2.867$ & $1.867$ & $0.603$ & $0.071$ \\
$k = 2$ & $7.133$ & $6.133$ & $0.849$ & $0.176$ \\
$k = 3$ & $10.600$ & $10.000$ & $0.933$ & $0.258$ \\
$k = 4$ & $15.600$ & $15.533$ & $0.989$ & $0.385$ \\
$k = 5$ & $21.067$ & $21.200$ & $1.003$ & $0.524$ \\
\bottomrule
\end{tabular}
\end{table}

We observe that for $k = 3, 4$, the context subgraphs achieve a coverage of 26-52\%, most likely capturing meaningful local chemical environments, such as a functional group together with their immediate neighbors, while leaving enough information in the complementary subgraph to predict the missing signal. 

\subsection{Tasks and Dataset Sizes Overview}
In~\cref{apx:molnet-size}, we summarize the tasks and dataset sizes of the MoleculeNet benchmarks. Likewise, in~\cref{apx:MARCEL-size}, we present an overview of the GEOM, Drugs-75K, and Kraken datasets, including the corresponding number of conformers.

\begin{table}[!t]
\caption{Overview of tasks and sizes for the MoleculeNet datasets.}
\label{apx:molnet-size}
\begin{center}
		\begin{small}
			\begin{sc}
\resizebox{0.68\textwidth}{!}{%
    \begin{tabular}{llcccccccc}
    \toprule
    &  & BBBP & Tox21 & ToxCast & Sider & ClinTox & MUV & HIV & Bace \\
    \midrule
    & \# Molecules & 2,039 & 7,831 & 8,575 & 1,427 & 1,478 & 93,087 & 41,127 & 1,513 \\ 
    & \# Tasks     & 1     & 12    & 617   & 27    & 2     & 17     & 1      & 1     \\
    \bottomrule
    \end{tabular}
}
\end{sc}
\end{small}
\end{center}
\vspace{1.5em}
\caption{Overview of tasks and sizes for the GEOM, Drugs-75K and Kraken datasets.}
\label{apx:MARCEL-size}
\begin{center}
		\begin{small}
			\begin{sc}
    \resizebox{0.35\textwidth}{!}{%
    \begin{tabular}{llccc}
    \toprule
    &  & GEOM & Drugs-75K & Kraken  \\
    \midrule
    & \# Molecules & 304,466 & 75,099 & 1,552\\ 
    & \# Conformers & ~25M & 558,002 & 21,287 \\ 
    & \# Tasks     & - & 3 & 4\\
    \bottomrule
    \end{tabular}}
\end{sc}
\end{small}
\end{center}
\end{table}

\subsection{Comparison with Frozen Backbones on the Kraken Dataset}
\label{apx:kraken-probe}
We evaluate transfer to molecular property regression on Kraken~\citep{kraken2022gensch2022}, which contains 1,552 ligands labeled with four 3D descriptors (Sterimol B5, Sterimol L, buried Sterimol B5, buried Sterimol L). Since Kraken molecules are disjoint from GEOM, it provides a strong test of generalization. As shown in \cref{apx:fig:kraken-mae}, GEOM-pretrained backbones start with lower error, converge faster, and consistently outperform random initialization, with even the 2D-only pretrained model surpassing a randomly initialized multimodal one. Adding 3D information yields further gains, amplified by pretraining.

\begin{figure}[!t]
  \centering
  \begin{minipage}[t]{0.48\textwidth}
    \centering
    \includegraphics[width=.77\linewidth]{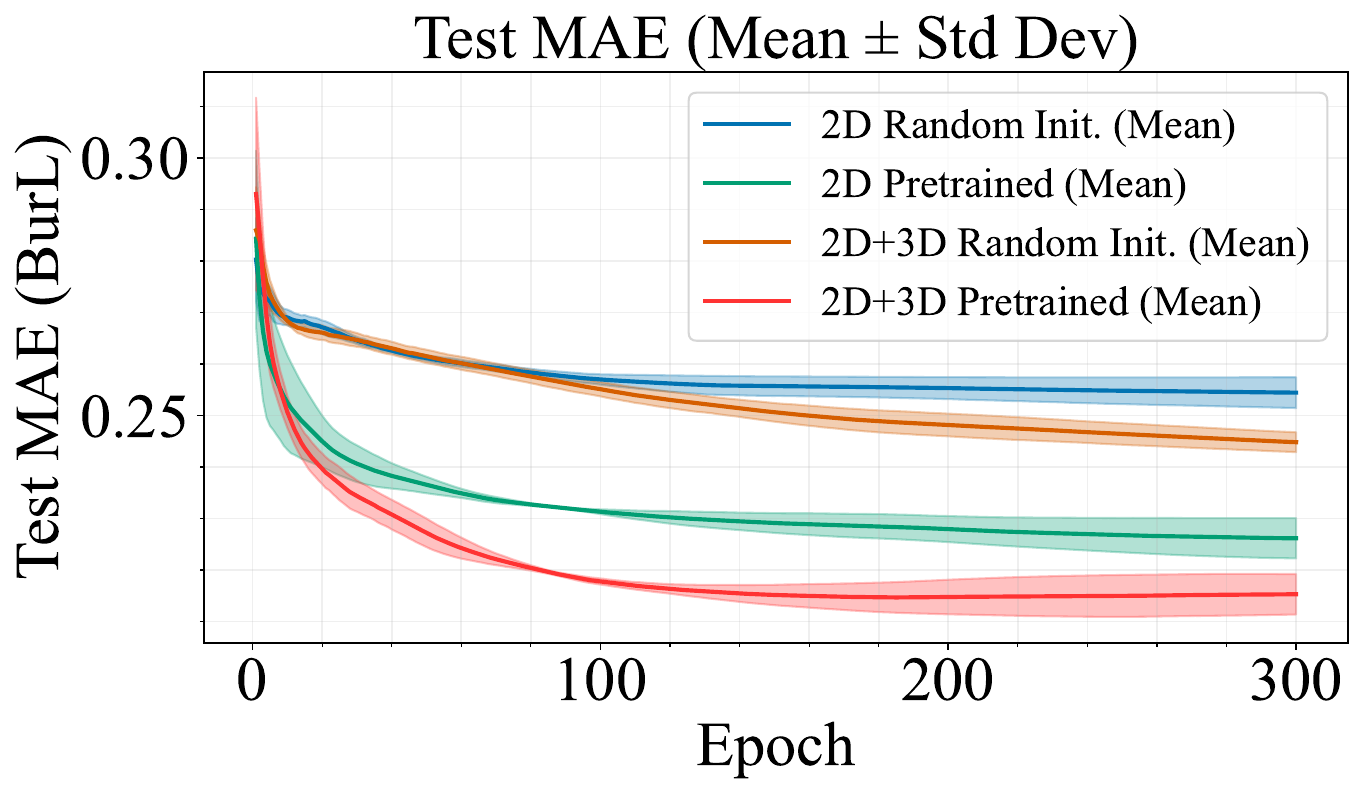}
    \label{fig:burl}
  \end{minipage}\hfill
  \begin{minipage}[t]{0.48\textwidth}
    \centering
    \includegraphics[width=.77\linewidth]{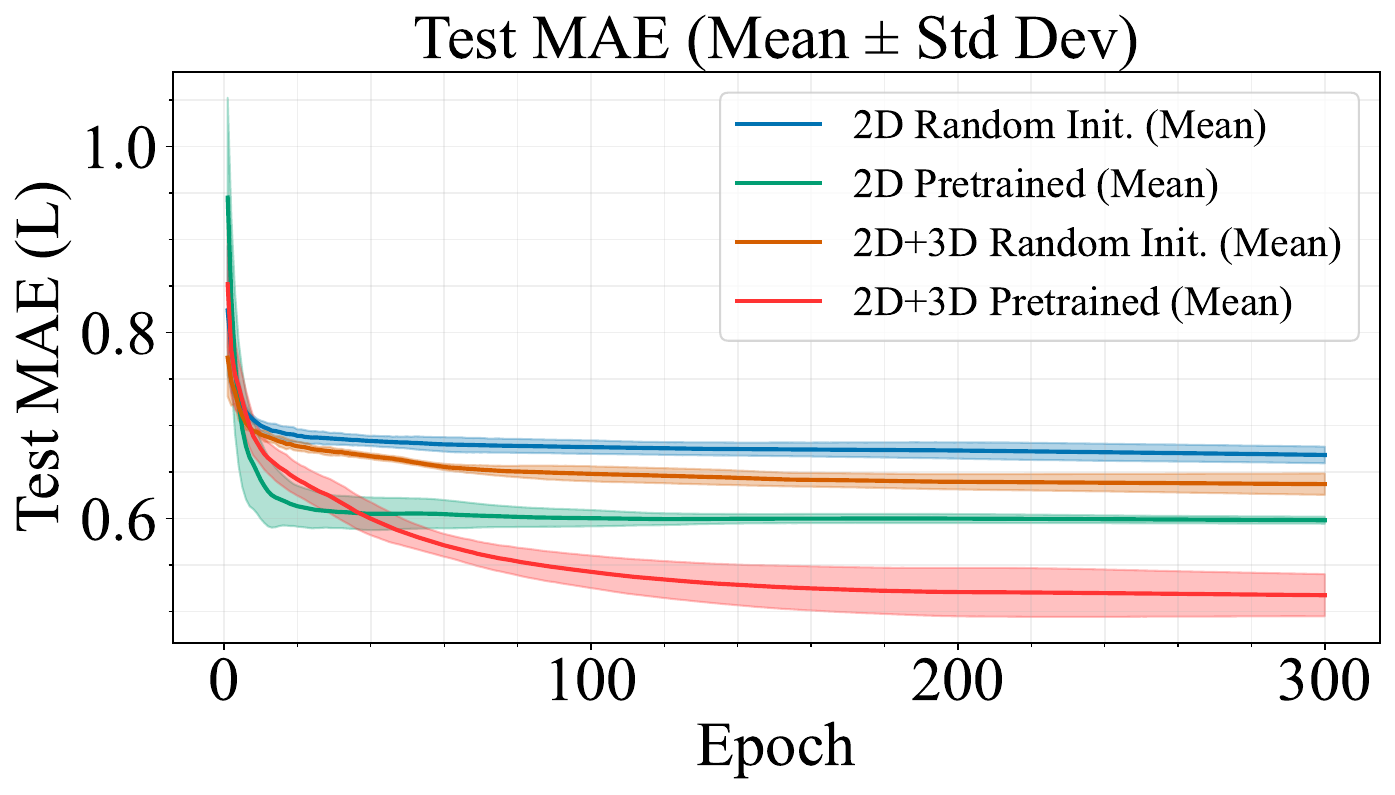}
    \label{fig:l}
  \end{minipage}
  \begin{minipage}[t]{0.48\textwidth}
    \centering
    \includegraphics[width=.8\linewidth]{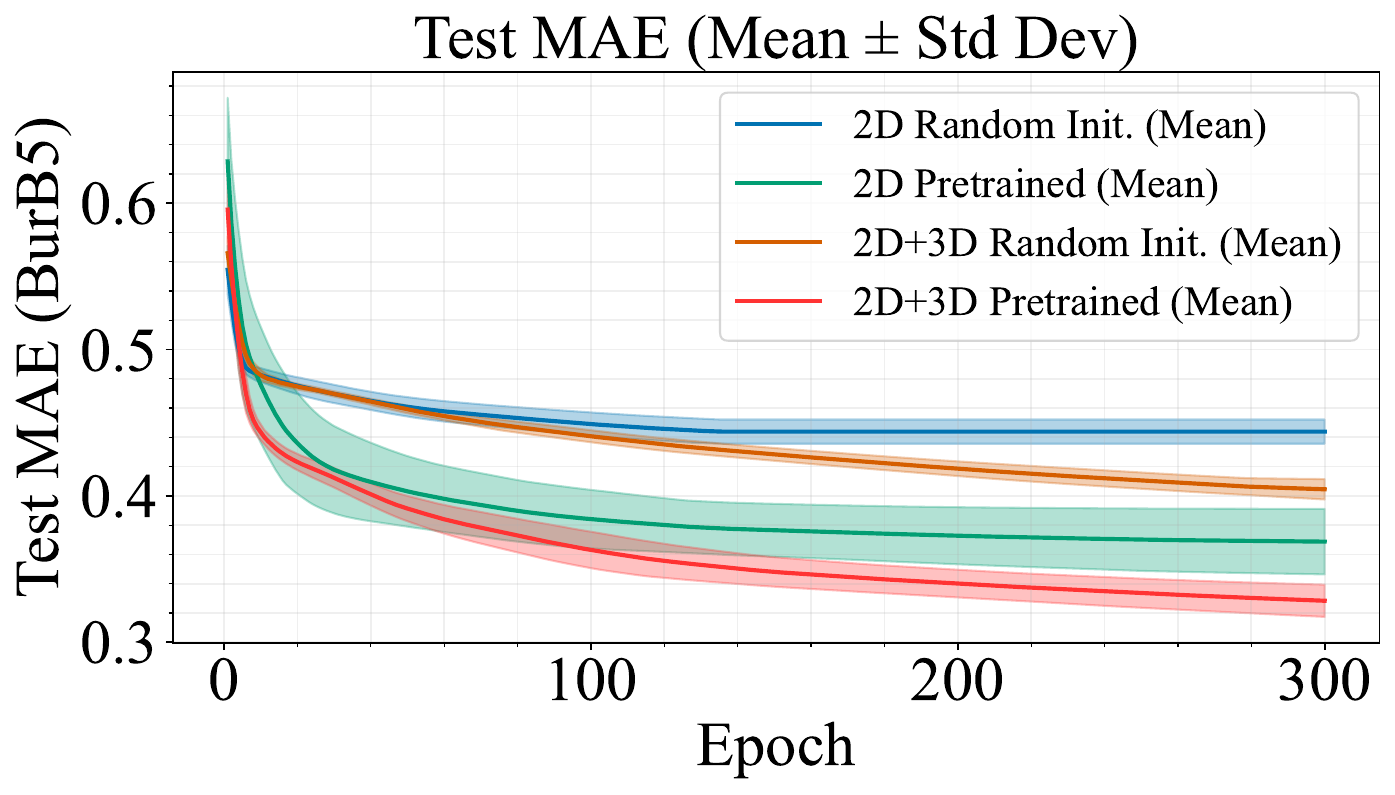}
    \label{fig:burb5}
  \end{minipage}\hfill
  \begin{minipage}[t]{0.48\textwidth}
    \centering
    \includegraphics[width=.8\linewidth]{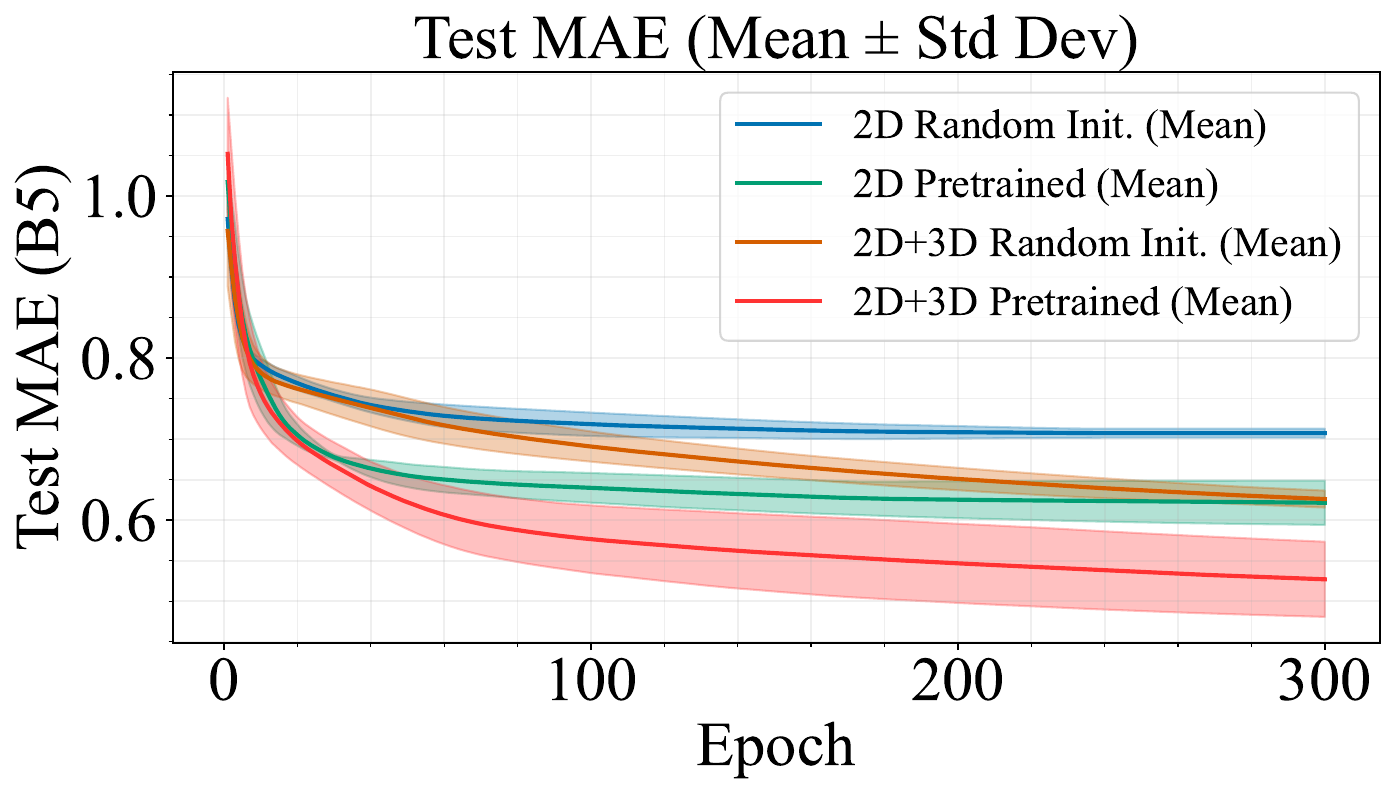}
    \label{fig:b5}
  \end{minipage}
  \vspace{-1em}
    \caption{Test MAE on the Kraken regression tasks (Sterimol BurL and Sterimol L) with frozen backbones. GEOM-pretrained models consistently outperform random initialization for both 2D-only and multimodal variants. pretrained models begin with lower error and converge faster, while randomly initialized models fail to match performance even after 300 epochs. Incorporating the 3D modality yields further gains over 2D-only backbones, with pretraining amplifying this advantage. Curves show the mean over 3 runs, with shaded regions indicating the standard deviation.}
  \label{apx:fig:kraken-mae}
\end{figure}

\subsection{Additional Details and Results}
\label{apx:add-exps}

On the ZINC benchmark (\Cref{tab:zinc-ft}), our method shows a clear performance improvement over GraphJEPA. This suggests that the gains come from the combination of architectural choices and the overall training pipeline, with EgoNets contributing as one part of this broader design.


\begin{table*}[!t]
\caption{Performance on MoleculeNet~\citep{moleculenet2018wu} with 2D-only frozen backbones. \textbf{Non-CL} denotes non-contrastive and \textbf{CL} contrastive methods. We report \textbf{C-FREE\(_{\text{2D}}\)} (2D-only) and \textbf{C-FREE\(_{\text{MM}}\)} (multimodal), each with linear probing on whole-molecule embeddings (\textbf{MOL}) or on subgraphs using subgraph aggregation with DeepSets~\citep{deep2017zaheer} (\textbf{SUB}). Metric: ROC-AUC ($\uparrow$). \firstval{\textbf{Red}} marks the best model and \secval{\textbf{Blue}} the second best. \textbf{C-FREE} ranks first or second on 6 of 8 datasets, with \textbf{MM-MOL} best overall, while even the 2D-only variants of C-FREE outperform all baselines on average.}

\label{apx:tab:ssl-compare}
\begin{center}
		\begin{small}
			\begin{sc}
\resizebox{\textwidth}{!}{%
    \begin{tabular}{llcccccccc|cc}
    \toprule
    & & \multicolumn{8}{c}{\textbf{MoleculeNet Datasets (Linear Probe)}}\\
    &  & BBBP ($\uparrow$) & Tox21 ($\uparrow$) & ToxCast ($\uparrow$) & Sider ($\uparrow$) & ClinTox ($\uparrow$) & MUV ($\uparrow$) & HIV ($\uparrow$) & Bace ($\uparrow$) & Avg ($\uparrow$) \\ 
    \midrule
    
    & Random init.     & $50.7_{\pm 2.5}$ & $64.9_{\pm 0.5}$ & $53.2_{\pm 0.3}$ & $53.2_{\pm 1.1}$ & $63.1_{\pm 2.3}$ & $62.1_{\pm 1.3}$ & $66.1_{\pm 0.7}$ & $63.4_{\pm 1.8}$ & $59.60$ \\ 
    \midrule
    \multirow{4}{*}{\rotatebox[origin=c]{90}{{CL}}}
    & InfoGraph   & $65.9_{\pm 0.6}$ & $65.8_{\pm 0.7}$ & $54.6_{\pm 0.1}$ & $57.2_{\pm 1.0}$ & $61.4_{\pm 4.8}$ & $63.9_{\pm 1.9}$ & $71.4_{\pm 0.6}$ & $67.4_{\pm 4.9}$ & $63.44$ \\ 
    & GROVER      & ${{67.0}_{\pm 0.3}}$ & $63.9_{\pm 0.3}$ & $53.6_{\pm 0.4}$ & \firstval{$\mathbf{59.9}_{\pm 1.7}$} & $65.0_{\pm 6.4}$ & $62.7_{\pm 1.4}$ & $67.8_{\pm 1.0}$ & $69.0_{\pm 4.7}$ & $63.62$ \\ 
    & GraphCL     & $64.7_{\pm 1.7}$ & $69.1_{\pm 0.5}$ & $56.2_{\pm 0.2}$ & \secval{$\mathbf{59.5}_{\pm 0.9}$} & $60.8_{\pm 3.0}$ & $60.6_{\pm 1.8}$ & $72.5_{\pm 1.4}$ & \firstval{$\mathbf{77.0}_{\pm 1.7}$} & $65.04$ \\ 
    & JOAO        & $66.1_{\pm 0.8}$ & $68.1_{\pm 0.2}$ & $55.1_{\pm 0.4}$ & $58.3_{\pm 0.3}$ & $65.3_{\pm 6.1}$ & $62.4_{\pm 1.2}$ & \firstval{$\mathbf{73.8}_{\pm 1.2}$} & $71.1_{\pm 0.8}$ & $65.05$ \\
    & GraphMVP    & $69.2_{\pm 1.8}$ & $63.8_{\pm 0.3}$ & $55.5_{\pm 0.3}$ & $58.6_{\pm 0.4}$ & $58.7_{\pm 1.9}$ & $63.8_{\pm 1.3}$ & {${68.6}_{\pm 1.0}$} & $73.3_{\pm 4.7}$ & $63.92$ \\
    \midrule
    \multirow{8}{*}{\rotatebox[origin=c]{90}{{Non-CL}}}
    & EdgePred    & $54.2_{\pm 1.0}$ & $66.2_{\pm 0.2}$ & $54.4_{\pm 0.1}$ & $56.1_{\pm 0.1}$ & $65.4_{\pm 5.0}$ & $59.5_{\pm 0.9}$ & \secval{$\mathbf{73.6}_{\pm 0.4}$} & $71.4_{\pm 1.2}$ & $62.59$ \\ 
    & AttrMask    & $62.7_{\pm 2.7}$ & $65.7_{\pm 0.8}$ & $56.1_{\pm 0.2}$ & $58.3_{\pm 1.5}$ & $61.9_{\pm 6.4}$ & $60.9_{\pm 1.8}$ & $65.5_{\pm 1.4}$ & $64.8_{\pm 2.6}$ & $61.99$ \\ 
    & GPT-GNN     & $62.0_{\pm 0.9}$ & $64.9_{\pm 0.7}$ & $55.4_{\pm 0.2}$ & $55.3_{\pm 0.8}$ & $55.0_{\pm 5.1}$ & $61.2_{\pm 1.5}$ & $71.2_{\pm 1.5}$ & $61.0_{\pm 1.2}$ & $60.74$ \\ 
    & ContextPred  & $55.5_{\pm 2.0}$ & $67.9_{\pm 0.7}$ & $54.0_{\pm 0.3}$ & $57.1_{\pm 0.5}$ & {${67.4}_{\pm 4.3}$} & $60.5_{\pm 0.9}$ & $66.2_{\pm 1.5}$ & $54.4_{\pm 3.2}$ & $60.36$ \\ 
    \addlinespace[0.3ex]
    \cdashline{2-12}[0.5pt/2pt]
    \addlinespace[0.5ex]
    & \textbf{C-FREE$_{\text{2D-MOL}}$}  & $60.5_{\pm 1.7}$ & {${76.1}_{\pm 0.2}$} & {${62.7}_{\pm 0.4}$} & ${59.0}_{\pm 0.6}$ & $62.7_{\pm 1.0}$ & ${{67.6}_{\pm 0.5}}$ & $68.7_{\pm 0.4}$ & \secval{$\mathbf{75.8}_{\pm 0.9}$} & {${66.63}$} \\
    & \textbf{C-FREE$_{\text{2D-SUB}}$}  & ${64.2}_{\pm 3.8}$ & \secval{$\mathbf{76.7}_{\pm 0.6}$} & {${63.9}_{\pm 0.3}$} & $58.0_{\pm 0.7}$ & {${71.4}_{\pm 3.7}$} & $64.6_{\pm 3.1}$ & $65.5_{\pm 0.6}$ & $73.9_{\pm 0.7}$ & $67.27$ \\ 
    
    \addlinespace[0.3ex]
    \cdashline{2-12}[0.5pt/2pt]
    \addlinespace[0.5ex]
    
    & \textbf{C-FREE$_{\text{MM-MOL}}$}  & \secval{$\mathbf{69.8}_{\pm 2.6}$} & $\firstval{\mathbf{79.9}_{\pm 1.1}}$ & \secval{$\mathbf{65.8}_{\pm 0.7}$} & $58.5_{\pm 2.5}$ & $\secval{\mathbf{69.9}_{\pm 1.9}}$ & $\firstval{\mathbf{76.6}_{\pm 2.8}}$ & $72.8_{\pm 0.7}$ & $75.3_{\pm 1.1}$ & ${\firstval{\mathbf{71.07}}}$  \\
    & \textbf{C-FREE$_{\text{MM-SUB}}$} & $\firstval{\mathbf{73.8}_{\pm 2.1}}$ & $\secval{\mathbf{76.7}_{\pm 0.7}}$ & $\firstval{\mathbf{66.8}_{\pm 0.2}}$ & $56.4_{\pm 1.5}$ & \firstval{$\mathbf{75.7}_{\pm 2.2}$} & \secval{$\mathbf{70.6}_{\pm 1.0}$} & $71.9_{\pm 1.5}$ & $75.5_{\pm 1.9}$ & $\secval{\mathbf{70.92}}$\\
    \bottomrule
    \end{tabular}
}
\end{sc}
\end{small}
\end{center}
\end{table*}

In~\cref{apx:tab:ssl-compare} we provide the full linear probe results paper from~\cite{ssleval2023wang}. In addition, we also report results for full fine-tuning on the Kraken dataset. For this, we use the same setup as in~\cref{sec:frozen}, with an MLP head, and fine-tune the model end-to-end. In~\cref{apx:fig:kraken-mae-2d}, we show results for the 2D-only variant to highlight the initial performance gap, where the pretrained backbone converges faster than the randomly initialized one.

\begin{figure}[!t]
  \centering
  \begin{minipage}[t]{0.48\textwidth}
    \centering
    \includegraphics[width=\linewidth]{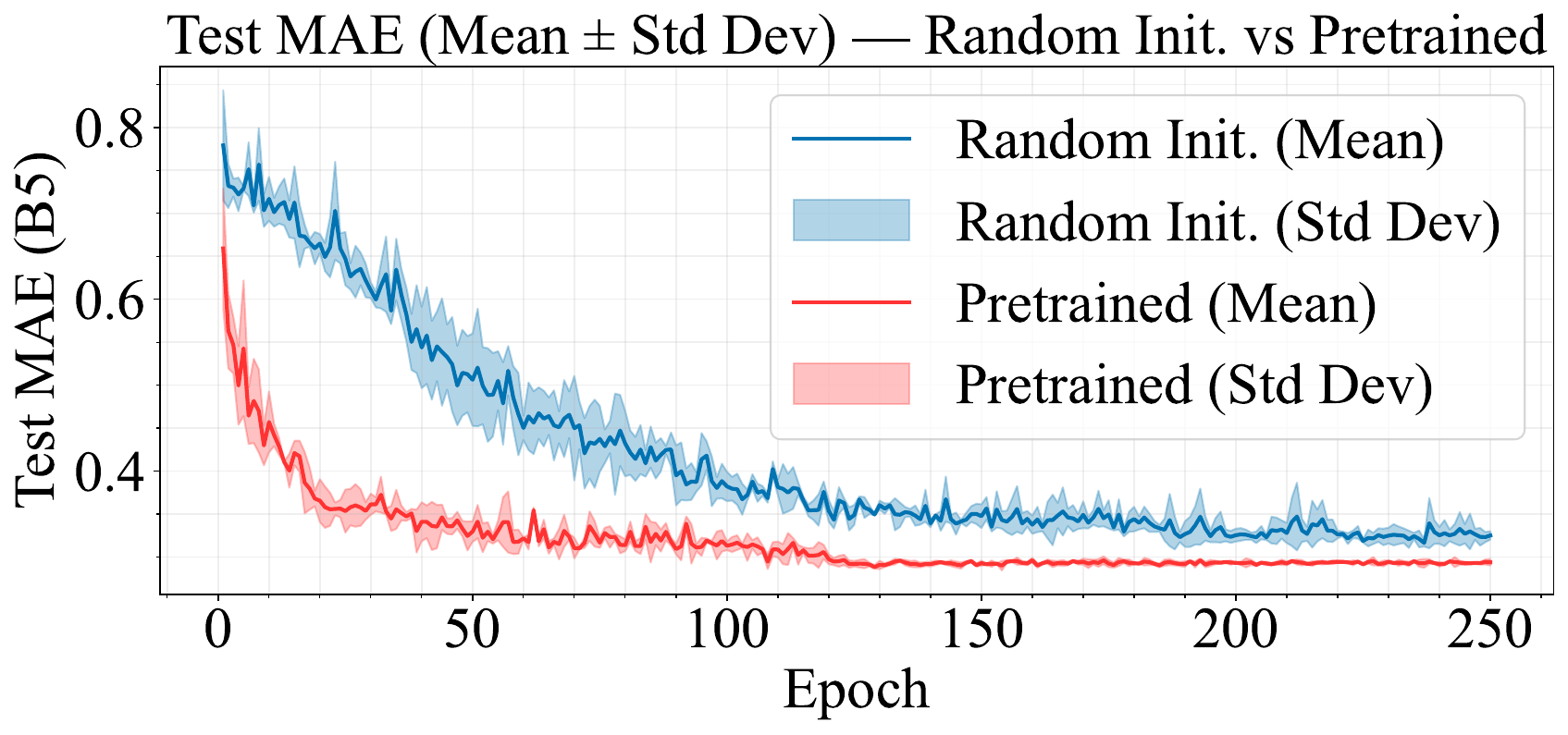}
    \label{fig:b5-2d}
  \end{minipage}\hfill
  \begin{minipage}[t]{0.48\textwidth}
    \centering
    \includegraphics[width=\linewidth]{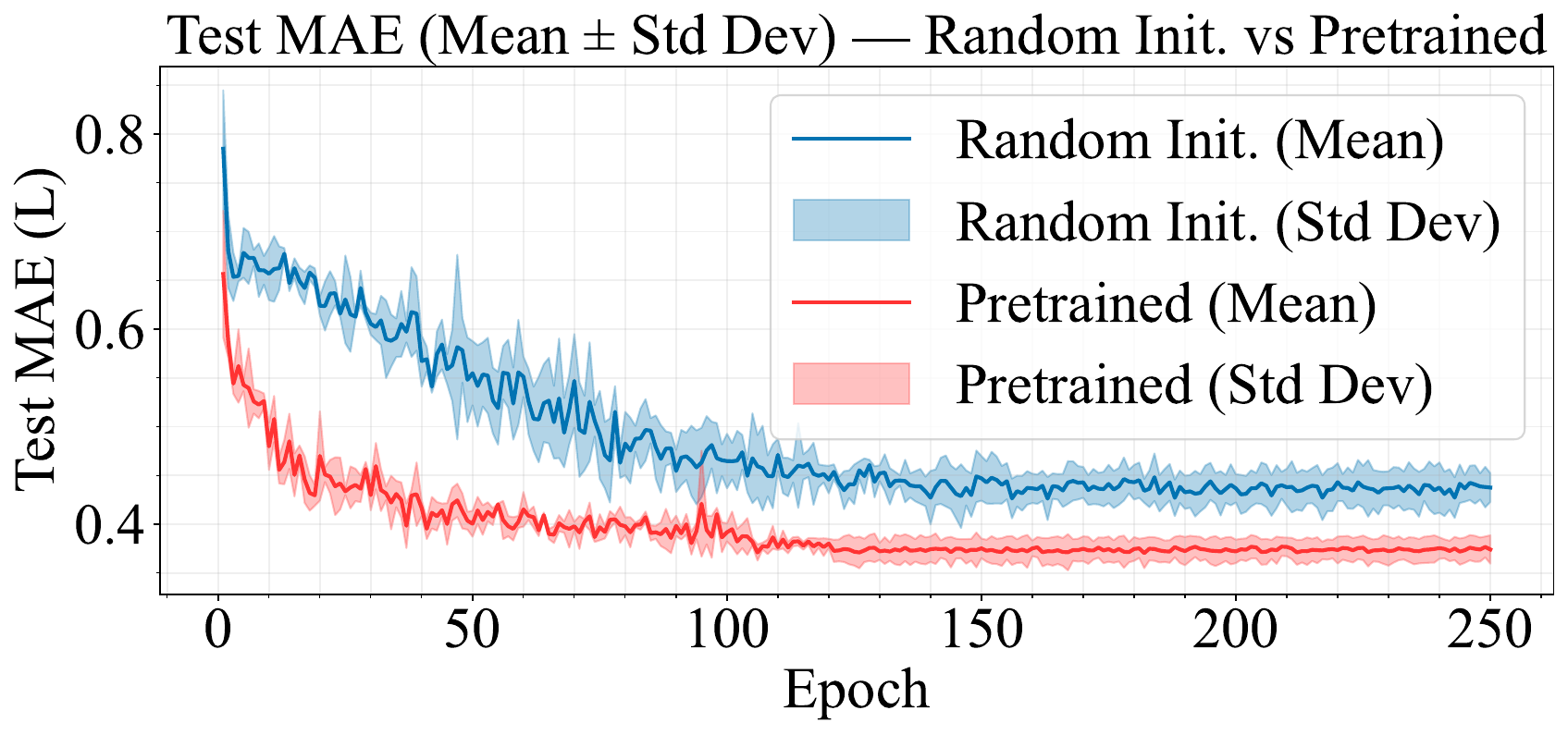}
    \label{fig:l-2d}
  \end{minipage}
  \\
  \begin{minipage}[t]{0.48\textwidth}
    \centering
    \includegraphics[width=\linewidth]{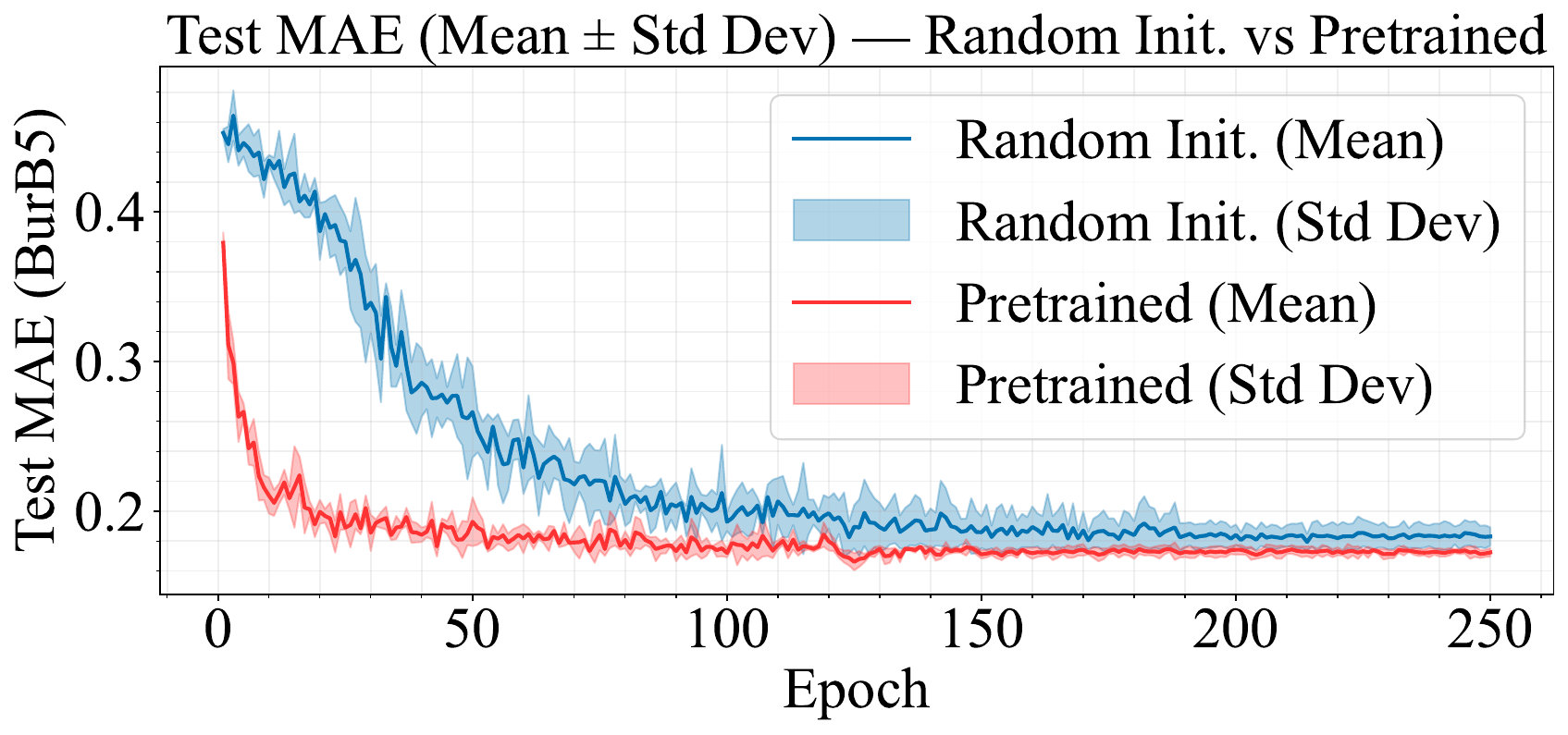}
    \label{fig:burb5-2d}
  \end{minipage}\hfill
  \begin{minipage}[t]{0.48\textwidth}
    \centering
    \includegraphics[width=\linewidth]{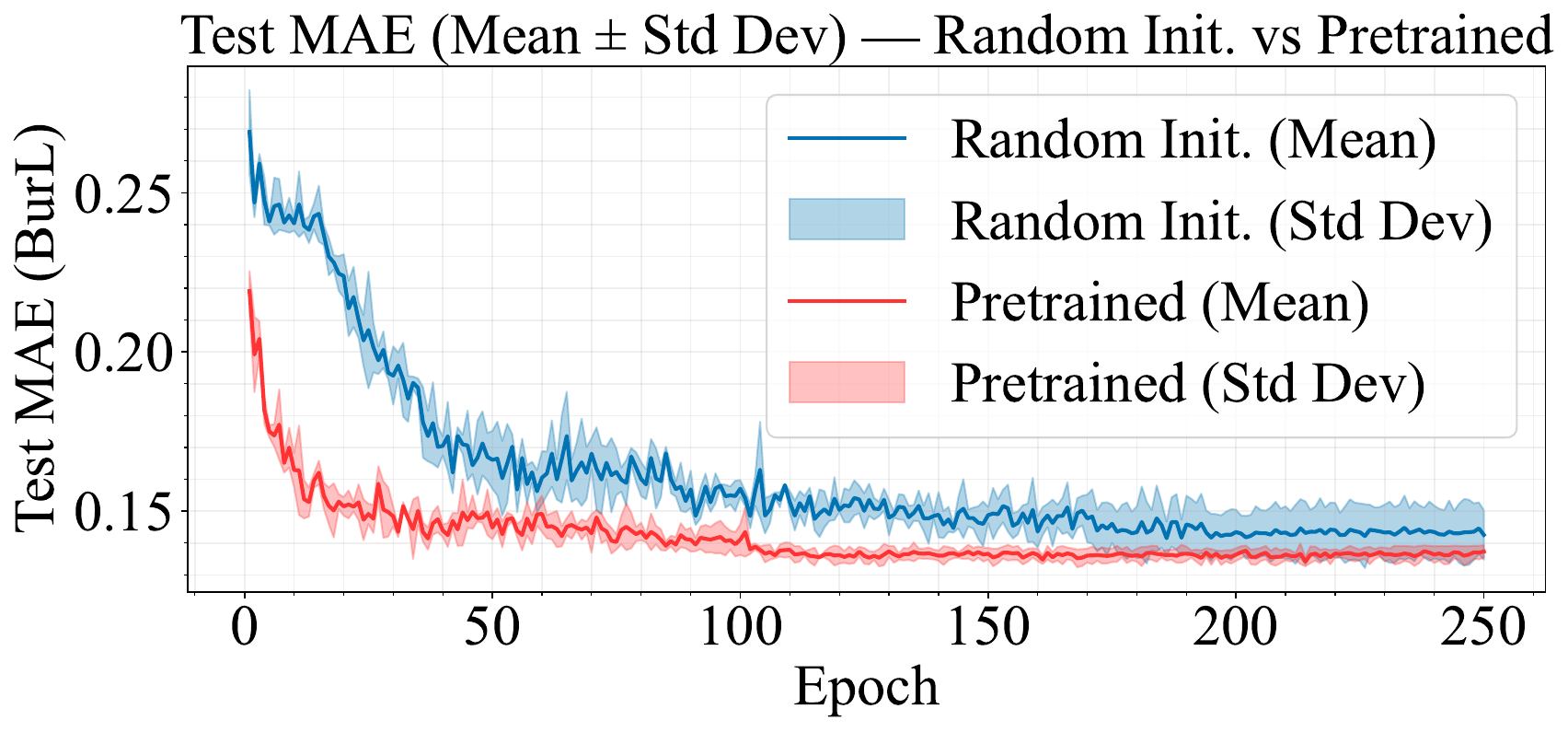}
    \label{fig:burl-2d}
  \end{minipage}
  
  \vspace{-1em}
  \caption{Test MAE on Kraken regression tasks (Sterimol L,B5, BurB5 and BurL) comparing random initialization and 2D-only GEOM-pretrained models. pretrained models start with lower error and converge faster, while randomly initialized models fail to match their performance even after 250 epochs. Curves show the mean over 3 runs, with shaded regions indicating the standard deviation. }
  \label{apx:fig:kraken-mae-2d}
\end{figure}

Finally, in~\cref{tab:ablation}, we report the explicit numerical results of the probe presented in~\cref{sec:predictor-ablation}, evaluated after the full convergence of the model. Additionally, we include the results from end-to-end fine-tuning using the same experimental setup, providing a complete view of how the model performs when the entire backbone is updated. These fine-tuning results further confirm our observation that the predictor transformer consistently outperforms the MLP predictor, while omitting the predictor altogether leads to substantially worse performance, highlighting the importance of the predictor design in our framework.

\begin{table}[!t]
\caption{Ablation study on the Kraken dataset (MAE $\downarrow$). We keep the encoder fixed and compare three predictors: (1) none, (2) a 2-layer MLP, and (3) a transformer. The transformer consistently achieves the best performance. The gap is especially pronounced in the linear probe (\textsc{Lin. P.)} setting, where the quality of the learned representations matters most. Even with full fine-tuning (\textsc{FT}), the no-predictor and MLP variants fail to match the transformer predictor.}
\label{tab:ablation}
\begin{center}
    \begin{sc}
        \scalebox{0.8}{
        \begin{tabular}{llcccc}
        \toprule
        & Method & B5 $\downarrow$ & L $\downarrow$ & BurB5 $\downarrow$ & BurL $\downarrow$ \\
        \midrule
        \multirow{3}{*}{\rotatebox[origin=c]{90}{{FT}}} &
        None & $0.381_{\pm 0.023}$ & $0.494_{\pm 0.020}$ & $0.202_{\pm 0.009}$ & $0.157_{\pm 0.004}$  \\
        & 2-layers MLP & $0.315_{\pm 0.017}$ & $0.396_{\pm 0.018}$ & $0.185_{\pm 0.009}$ & $\mathbf{0.144}_{\pm 0.004}$ \\
        & Transformer & $\mathbf{0.292}_{\pm 0.006}$ & $\mathbf{0.380}_{\pm 0.023}$ & $\mathbf{0.180}_{\pm 0.014}$ & $0.146_{\pm 0.004}$ \\
        \addlinespace[0.3ex]
        \cdashline{2-6}[0.5pt/2pt]
        \addlinespace[0.5ex]
        \multirow{3}{*}{\rotatebox[origin=c]{90}{{Lin. P.}}} &
        None & $1.065_{\pm 0.001}$ & $0.814_{\pm 0.001}$ & $0.624_{\pm 0.001}$ & $0.296_{\pm 0.001}$  \\
        & 2-layers MLP & $0.817_{\pm 0.002}$ & $0.687_{\pm 0.008}$ &  $0.514_{\pm 0.001}$ &  $0.266_{\pm 0.001}$ \\
        & Transformer & $\mathbf{0.588}_{\pm 0.004}$ & $\mathbf{0.554}_{\pm 0.007}$ & $\mathbf{0.347}_{\pm 0.003}$ & $\mathbf{0.202}_{\pm 0.008}$ \\
        \midrule
        \end{tabular}
        }
    \end{sc}
\end{center}
\end{table}

\subsection{Additional Related Work}
\label{apx:related-work}

{\textbf{Contrast-Free} In this paper, our use of the term “contrast-free” follows the standard terminology in the representation learning literature, where methods such as BYOL~\citep{byol2020grill}, DINO~\citep{oquab2023dinov2}, and JEPA~\citep{jepa2023assran} are described as contrast-free because they avoid explicit contrastive objectives, large negative sets, and the large batch requirements of InfoNCE-style losses. We use the term in the same sense here: our model does not rely on negative pairs, large batch sizes, or explicit contrastive objectives, which helps avoid the computational overhead and instability often seen in contrastive training.}

\textbf{Contrastive Learning} UniCorn~\citep{unicorn2024feng} presents a unified contrastive learning framework that integrates multiple molecular views and existing methods into a single pretraining approach. 3D-Mol~\citep{3d-mol2024kuang} leverages 3D conformational information by constructing hierarchical graphs and applying contrastive learning to differentiate molecular conformations. GraphFP~\citep{graphfp2023luong} captures higher-level connectivity through fragments (representations of molecular substructures) aligning fragment embeddings with their corresponding graph regions to enable multi-resolution structural learning. MolCLR~\citep{molclr2022wang2022} employs three graph augmentations, namely atom masking, bond deletion, and subgraph removal and uses contrastive learning to bring augmented views of the same molecule closer, and Galformer~\citep{galformer2023bai} applies dual-view contrastive learning. Finally, GraphLoG~\citep{graphlog2021xu} captures both local similarities and global semantic clusters in whole-graph representations using hierarchical prototypes trained via an online EM algorithm.

\textbf{Generative Learning} GraphMAE~\citep{graphmae2022hou} adapts the Masked Autoencoder (MAE) from the vision domain to graphs, focusing on reconstructing node attribute features using a scaled cosine error. Similarly, MoleBERT~\citep{mole-bert2023xia} extends this idea with a VQ-VAE-based context-aware tokenizer that encodes atom attributes into a larger, chemically meaningful discrete vocabulary, enabling a Masked Atoms Modeling (MAM) task where GNNs predict masked atom codes rather than raw features. Finally, MGSSL~\citep{mgssl2021zhang} leverages a BRICS-based fragmentation method to extract molecular motifs and pretrains GNNs to predict motif topology and labels, incorporating multi-level pretraining to capture both local and global graph information.

\textbf{Latent Representation Learning} CCA-SSG~\citep{cca-ssg2021zhang} introduces an alignment objective based on Canonical Correlation Analysis, encouraging the latent features of two augmented views to be maximally correlated while remaining de-correlated across dimensions.  Complementing these augmentation-based methods, AFGRL~\citep{afgrl2022lee} proposes a more principled strategy for view generation by identifying structurally and semantically similar anchor nodes, mitigating the reliance on handcrafted augmentations. 

\textbf{Molecular Foundation Models}
Early SMILES-only models such as ChemBERTa-v1~\citep{chithrananda2020chemberta} and its improved successor ChemBERTa-v2~\citep{ahmad2022chemberta2} employ transformer-based language modeling to provide strong molecular embeddings from sequence data alone. Beyond purely textual inputs, SCAGE~\citep{teng2025scage} incorporates 3D conformational information through a self-conformation-aware graph transformer. UniMol-v1~\citep{zhou2023unimol} further established 3D-informed pretraining by combining SE(3)-equivariant architectures with large-scale conformer data, while UniMol-v2~\citep{ji2024unimol2} scales this approach to the billion-parameter regime with substantially expanded datasets. More recently, MolFM~\citep{luo2023molfm} introduced a multimodal formulation integrating molecular graphs, biomedical text, and knowledge-graph information into a unified embedding space, while GEM~\citep{Fang2022gem} captures molecular topology and geometry using a geometry-aware GNN and uses multiple self-supervised tasks to learn spatial knowledge from large-scale molecular structures.


\end{document}